\newcommand{\RR}{\mathbb{R}}
\newcommand{\Qstar}{$Q^*$}
\newcommand{\Rmax}{r_{\mathrm{max}}}
\newcommand{\EE}{\mathbb{E}}
\newcommand{\zero}{\mathbf{0}}
\newtheorem{prop}{Proposition}
\newtheorem{defn}{Definition}
\title{Control-Oriented Model-Based Reinforcement Learning with Implicit Differentiation}
\author{
	\parbox{\linewidth}{
		\centering
		Evgenii Nikishin$^{1}$\quad
		Romina Abachi$^{2}$\quad
		Rishabh Agarwal$^{1,3}$\quad
		Pierre-Luc Bacon$^{1,4}$
	}\\
	~\\
	\parbox{\linewidth}{
		\centering
		$^1$Mila, Université de Montréal, 
		$^2$Vector Institute, University of Toronto \\ 
		$^3$Google Research, Brain Team, 
		$^4$Facebook CIFAR AI Chair
	}
}
\begin{document}

\maketitle

\begin{abstract}
  The shortcomings of maximum likelihood estimation in the context of \mbox{model-based} reinforcement learning have been highlighted by an increasing number of papers. When the model class is misspecified or has a limited representational capacity, model parameters with high likelihood might not necessarily result in high performance of the agent on a downstream control task. To alleviate this problem, we propose an end-to-end approach for model learning which directly optimizes the expected returns using implicit differentiation. We treat a value function that satisfies the Bellman optimality operator induced by the model as an implicit function of model parameters and show how to differentiate the function. We provide theoretical and empirical evidence highlighting the benefits of our approach in the model misspecification regime compared to likelihood-based methods.
\end{abstract}

\section{Introduction} 

The conceptual separation between model learning and policy optimization is the basis for much of the work on model-based reinforcement learning~(MBRL)~\citep{Grefenstette1990, sutton1991dyna, Lin1992, Boots2011, chua2018deep, hafner2019learning, janner2019trust, kaiser2019model}. 
A standard MBRL agent first estimates the transition parameters and the reward function of a Markov Decision Process and then uses the approximate model for planning~\citep{Theil1957, kurano1972discrete, Mandl1974, Georgin1978, Borkar1979, HernndezLerma1985, Manfred1987, Sato1988}. 
If the estimated model perfectly captures the actual system, the resulting policies are not affected by the model approximation error. However, if the model is imperfect, the inaccuracies can lead to nuanced effects on the policy performance~\citep{Abbad1992, Manfred1987}. 
Several works~\citep{skelton1989model, joseph2013reinforcement, lambert2020objective} have pointed out on \emph{the objective mismatch} in MBRL and demonstrated that optimization of model likelihood might be unrelated to optimization of the returns achieved by the agent that uses the model.
For example, accurately predicting individual pixels of the next state~\citep{kaiser2019model} might be neither easy nor necessary for decision making.
Motivated by these observations, our paper studies control-oriented model learning that takes into account how the model is used by the agent. 

While much of the work in control-oriented model learning has focused on robust or uncertainty-based methods~\citep{Ludwig1982, Nilim2005, Iyengar2005, Xu2010, yu2020mopo}, we propose an algorithm for learning a model that directly optimizes the expected return using implicit differentiation~\citep{Christianson1994, Griewank2008}. Specifically, we assume that there exists \emph{an implicit function} that takes the model as input and outputs a value function that is a fixed point of the Bellman optimality operator~\citep{Denardo1967} induced by the model. 
We then calculate the derivatives of the optimal value function with respect to the model parameters using the implicit function theorem (IFT), allowing us to form a differentiable computational graph from model parameters to the sum of rewards.
In reference to~\citep{rust1988maximum,Sorg2010,baconlagrangian}, we call our control-oriented method \emph{optimal model design}~(OMD).

Our contributions can be summarized as follows:
\begin{itemize}
    \item We propose OMD, an end-to-end MBRL method that optimizes expected returns directly.
    \item We characterize the set of OMD models in the tabular case and derive an approximation bound on the optimal value function that is tighter than the likelihood-based bound.
    \item We propose a series of approximations to scale our approach to non-tabular environments.
    \item We demonstrate that OMD outperforms likelihood-based MBRL agents under the model misspecification in both tabular and non-tabular settings. This finding suggests that our method should be preferred when we cannot approximate the true model accurately.
    \item We empirically demonstrate that models obtained by OMD can have lower likelihood than a random model yet generate useful targets for updating the value function. This finding suggests that likelihood optimization might be an unnecessary step for MBRL.
\end{itemize}


\begin{figure}[t]
\begin{center}
\centerline{\includegraphics[width=0.7\columnwidth]{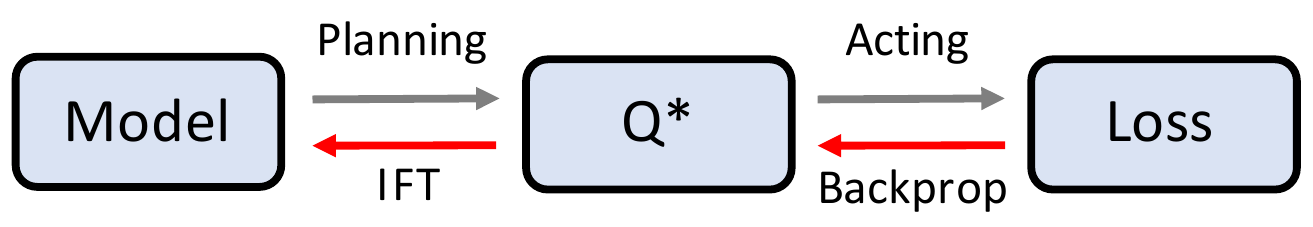}}
\caption{Illustration of the \emph{Optimal Model Design} approach: we treat the optimal Q-function as an implicit function of the model and calculate the gradient with respect to model parameters via the implicit function theorem as described in Section~\ref{sec:ift}.}
\label{fig:comp_graph}
\end{center}
\vskip -0.2in
\end{figure}

\section{Related work}
\label{sec:related}

\paragraph{Learning control-oriented models.} 
Earlier work in optimal control and econometrics \citep{skelton1989model, rust1988maximum} studied the relation between the model approximation error and the control performance and noted that true parameter identification could be suboptimal when the model class is limited.
\citet{joseph2013reinforcement} were one of the first to address the objective mismatch \citep{lambert2020objective} and proposed an algorithm for training models that maximize the expected returns using zero-order optimization. 

Several papers have proposed model learning approaches that optimize other return-aware objectives.
\citet{farahmand2017value} train a model to minimize the difference between values of the real next states and the next states predicted by the dynamics. \citet{abachi2020policy} use the norm of the difference between policy gradients as the model objective.
\citet{d2020gradient} use a weighted maximum likelihood objective where the weights are chosen to minimize the difference between the true policy gradient and the policy gradient in the MDP induced by the model. 
\citet{schrittwieser2019mastering} use tree search and train a model for image-based states by encoding them into a latent space and predicting a reward, a policy, and values without reconstructing the images. 

The idea of differentiable planning has also been investigated. 
\citet{amos2018differentiable} learn a model via differentiating the Karush–Kuhn–Tucker conditions in the LQR setting \cite{dorato1994linear}.
\citet{tamar2016value} uses a differentiable approximation of the value iteration algorithm to learn a planner.
\citet{amos2020differentiable} optimize the parameters of a sampling distribution in Cross-Entropy Method \cite{rubinstein1997optimization} using a differentiable approximation of Top-K operation. 

Several works have theoretically studied the control-oriented model learning.
\citet{ayoub2020model} derive regret bounds for models used to predict values. 
\citet{grimm2020value} introduce the principle of value equivalence for MBRL defining two models to be equivalent if they induce the same Bellman operator. 

Our work is closely related to the above papers but proposes to learn models by directly optimizing the sum of rewards in an end-to-end manner via gradient-based methods.

\paragraph{Implicit function theorem.} 
Implicit differentiation has been applied for a variety of bi-level optimization problems. 
\citet{lorraine2020optimizing} treat weights of a neural network as an implicit function of hyperparameters and use IFT to optimize the hyperparameters. 
\citet{rajeswaran2019meta} study meta-learning and apply IFT to compute the outer loop gradient without the need to differentiate through the inner loop iterations. 
Instead of treating a neural network as a sequence of layers that transform an input, \citet{bai2019deep} propose an implicit layer that corresponds to an infinite depth neural network and find a fixed point of the layer via IFT. 
Our method also solves a bi-level problem: in the inner loop, we train an action-value function compatible with the model, while in the outer loop we update the model parameters towards maximizing the expected returns.

\section{Preliminaries}
\label{sec:back}

Reinforcement Learning (RL)~\citep{sutton2018reinforcement} methods follow the Markov Decision Process (MDP) formalism. 
An MDP is defined as $\mathcal{M} = (\mathcal{S}, \mathcal{A}, \gamma, p, r, \rho_0)$, where $\mathcal{S}$ is a state space, $\mathcal{A}$ is an action space, $p(s'|s,a)$ is a transition probability distribution (often called \emph{dynamics}), $r(s,a)$ is a reward function, $\gamma \in [0, 1)$ is a discount factor, and $\rho_0(s)$ is an initial state distribution. 
The pair ($p$, $r$) is jointly called \emph{the true model}.
The goal of an agent is to learn a policy $\pi(a|s)$ that maximizes the expected discounted sum of rewards $J(\pi) = \mathbb{E}_\pi\left[\sum_{t=0}^\infty \gamma^t r(s_t, a_t)\right]$. 
The performance of the agent following the policy $\pi$ can also be quantified using the action value function $Q^\pi(s,a) = \mathbb{E}_\pi\left[\sum_{t=0}^\infty \gamma^t r(s_t, a_t) | s_0 = s, a_0 = a\right]$. 

Model-based RL algorithms typically train a model ($p_\theta$, $r_\theta$) and use it for policy or value learning. 
Traditional methods based on Dyna \cite{sutton1991dyna} rely on maximum likelihood estimation (MLE) of model parameters $\theta$. 
For example, if the true model is assumed to be Gaussian with a parameterized mean and a fixed variance, maximizing the likelihood is equivalent to minimizing the mean squared error of the prediction, namely, to solving
\begin{equation}
    \min_\theta \EE_{s, a, s'}\left[\|f_\theta(s,a) - s'\|^2\right], \quad
    \min_\theta \EE_{s, a, r}\left[(r_\theta(s,a) - r)^2\right].
\end{equation}

\section{Optimal Model Design for Tabular MDPs}
\label{sec:omd_tab}

Consider a modification of the original RL problem statement, which was first proposed by \citet{rust1988maximum} and revisited by \citet{baconlagrangian}. In addition to maximizing the expected returns $J$, we introduce a constraint forcing the action value function $Q$ to satisfy the Bellman equation induced by the model. The optimization problem becomes
\begin{equation}
    \label{eq:constr}
    \max_{Q, \theta} J(\pi_Q) \qquad
    \text{s.t. } Q(s, a) = B^{\theta} Q(s, a) \enspace \forall s \in \mathcal{S}, a \in \mathcal{A}.
\end{equation}

$B^{\theta}$ is \emph{the soft Bellman optimality operator} with respect to the model and $\pi_Q$ is the softmax policy:
\begin{equation}
    \label{eq:bellman_op_model}
    B^{\theta} Q(s, a) \triangleq r_\theta(s, a) + \gamma \mathbb{E}_{p_\theta(s'|s, a)}\log\sum_{a'}\exp Q(s', a'), \qquad
    \pi_Q(a|s) = \frac{\exp Q(s,a)}{\sum_{a'} \exp Q(s,a')}.
\end{equation}

We choose the soft Bellman operator with $\log\sum_{a'}\exp Q(s', a')$ over the ``hard'' version with $\max_{a'} Q(s', a')$ because of the differentiability of log-sum-exp. 
We also use a temperature $\alpha$ in softmax and log-sum-exp but omit it from the expressions for simplicity.
Note that finding a fixed point of the soft Bellman optimality operator corresponds to solving the MaxEnt RL formulation \cite{ziebart2008maximum}, but for a sufficiently small value of $\alpha$, the difference is negligible.\footnote{More details about the soft Bellman operator and MaxEnt RL could be found in \citep{levine2018reinforcement}.}

Suppose there exists an \emph{implicit} function $\varphi(\theta) = Q^*$ that takes as input a model and outputs a Q-function that satisfies the constraint in \eqref{eq:constr}. 
The sequence of transformations from the model parameters to the agent's performance can be described then using the following graph:
\begin{equation}
    \label{eq:graph}
    \theta \xrightarrow[]{\varphi} Q^* \xrightarrow[]{\exp} \pi_{Q^*} \xrightarrow[]{\text{act}} J.
\end{equation}
In Section~\ref{sec:ift}, we show how $\frac{\partial \varphi(\theta)}{\partial \theta}$ can be calculated using the implicit function theorem (IFT). Since $\frac{\partial J(\pi)}{\partial \pi}$ can be calculated using the policy gradient theorem \cite{sutton1999policy}, we can apply automatic differentiation to calculate the gradient with respect to $\theta$:
\begin{equation}
    \label{eq:grad_params}
    \frac{\partial J(\theta)}{\partial \theta} 
    = \underbrace{\frac{\partial J(\pi)}{\partial \pi}}_{\text{PG}} \cdot
    \underbrace{\frac{\partial \pi(Q^*)}{\partial Q^*}}_{\text{softmax}} \cdot
    \underbrace{\frac{\partial \varphi(\theta)}{\partial \theta}}_{\text{IFT}}.
\end{equation}

Given the expression for the gradient of $J$ with respect to $\theta$, we use an appropriate optimization method to train the model. 
We call the approach \emph{optimal model design} (OMD). 
Note that Dyna-based methods also train the Q-function to satisfy the constraint in \eqref{eq:constr} while using the likelihood as the objective for model parameters $\theta$ \citep{rajeswaran2020game}. 
In contrast, we train $\theta$ to \emph{directly optimize} the expected returns.

The optimization problem \eqref{eq:constr} suggests that OMD is a policy-based method~\citep{sutton1999policy}. However, we can turn it into a value-based approach~\citep{watkins1992q} by replacing the objective $J(\pi_Q)$ with the Bellman error:
\begin{equation}
    \label{eq:bellman_outer}
    \min_{Q, \theta} L^{\text{true}}(Q) \triangleq \sum_{s,a} \left(Q(s,a) - BQ(s,a)\right)^2 \qquad
    \text{s.t. } Q(s, a) = B^{\theta} Q(s, a) \enspace \forall s \in \mathcal{S}, a \in \mathcal{A},
\end{equation}

where $B$, similarly to $B^\theta$, is the soft Bellman operator but induced by \emph{the true reward $r$ and dynamics $p$}. We discuss the relation between the models obtained by solving problems~\eqref{eq:constr} and \eqref{eq:bellman_outer} in Section~\ref{sec:analysis}.

While the constraint $Q(s, a) = B^{\theta} Q(s, a)$ has to be satisfied for all state-action pairs limiting the approach to tabular MDPs, we show an extention to the function approximation case in Section~\ref{sec:omd_cont}.

\subsection{Implicit Differentiation}
\label{sec:ift}

In this subsection, we state the implicit function theorem used to calculate $\frac{\partial \varphi(\theta)}{\partial \theta}$.

\begin{restatable}{theorem}{ift}\emph{(Cauchy, Implicit Function)}
    Let $f: \Theta \times \mathcal{W} \to \mathcal{W}$ be a continuously differentiable function and $(\tilde{\theta}, \tilde{w})$ be a point satisfying $f(\tilde{\theta}, \tilde{w}) = \zero$. If the Jacobian $\frac{\partial f(\tilde{\theta}, \tilde{w})}{\partial w}$ is invertible, then there exists an open set $U \subseteq \Theta$ containing $\tilde{\theta}$ and a unique continuously differentiable function $\varphi$ such that $\varphi(\tilde{\theta}) = \tilde{w}$ and $f(\theta, \varphi(\theta)) = \zero$ for all $\theta \in U$. Moreover, \\
    \begin{equation}
        \label{eq:ift}
        \frac{\partial\varphi(\theta)}{\partial\theta} = -\left(\frac{\partial f(\theta, w^*)}{\partial w}\right)^{-1} \cdot \frac{\partial f(\theta, w^*)}{\partial \theta}\Big\rvert_{w^* = \varphi(\theta)} \quad \forall \theta \in U.
    \end{equation}
\end{restatable}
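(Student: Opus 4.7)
The plan is to prove existence and uniqueness of $\varphi$ via the Banach Fixed Point Theorem applied to a suitably chosen contraction, and then to obtain formula \eqref{eq:ift} by differentiating the defining relation $f(\theta, \varphi(\theta)) = \zero$ and solving.

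First, I would introduce the auxiliary map $T_\theta(w) = w - A^{-1} f(\theta, w)$, where $A = \frac{\partial f(\tilde{\theta}, \tilde{w})}{\partial w}$, chosen so that a fixed point of $T_\theta$ is exactly a zero of $f(\theta, \cdot)$. Observe that $\frac{\partial T_{\tilde{\theta}}(\tilde{w})}{\partial w} = I - A^{-1} A = \zero$, so by continuous differentiability of $f$ I can pick a closed ball $\overline{B}(\tilde{w}, r) \subseteq \mathcal{W}$ and an open neighborhood $U \subseteq \Theta$ of $\tilde{\theta}$ on which $\bigl\|\frac{\partial T_\theta(w)}{\partial w}\bigr\| \le \tfrac{1}{2}$ and $\|T_\theta(\tilde{w}) - \tilde{w}\| \le r/2$. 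A standard mean-value estimate then shows that $T_\theta$ is a $\tfrac{1}{2}$-contraction of $\overline{B}(\tilde{w}, r)$ into itself for every $\theta \in U$. Banach's theorem supplies a unique fixed point $\varphi(\theta)$ with $\varphi(\tilde{\theta}) = \tilde{w}$ and $f(\theta, \varphi(\theta)) = \zero$.

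Second, I would establish continuity (in fact, local Lipschitz continuity) of $\varphi$ on $U$ by the standard parameter-dependence argument for uniform contractions. By shrinking $U$ if necessary, continuity of $\partial_w f$ ensures that $\frac{\partial f(\theta, \varphi(\theta))}{\partial w}$ remains invertible throughout, so the right-hand side of \eqref{eq:ift} is well-defined on $U$. To obtain differentiability and the formula simultaneously, I would expand $f$ to first order along the curve $\theta \mapsto (\theta, \varphi(\theta))$: from $f(\theta + \Delta\theta, \varphi(\theta + \Delta\theta)) - f(\theta, \varphi(\theta)) = \zero$ together with continuous differentiability of $f$,
\[
\zero = \frac{\partial f}{\partial \theta}\,\Delta\theta + \frac{\partial f}{\partial w}\bigl(\varphi(\theta + \Delta\theta) - \varphi(\theta)\bigr) + o\bigl(\|\Delta\theta\| + \|\Delta\varphi\|\bigr).
\]
Using the Lipschitz bound $\|\Delta\varphi\| = O(\|\Delta\theta\|)$ collapses the remainder to $o(\|\Delta\theta\|)$, and left-multiplying by $-\bigl(\frac{\partial f}{\partial w}\bigr)^{-1}$ yields $\Delta\varphi = -\bigl(\frac{\partial f}{\partial w}\bigr)^{-1}\frac{\partial f}{\partial \theta}\,\Delta\theta + o(\|\Delta\theta\|)$, which is precisely \eqref{eq:ift}. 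Continuity of the Jacobian of $\varphi$ then follows from continuity of the partial derivatives of $f$ and of $\varphi$ itself.

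The main obstacle will be the technical bookkeeping that bridges existence and the derivative formula, namely showing $\varphi$ is locally Lipschitz so that $\|\Delta\varphi\|$ can be absorbed into an $o(\|\Delta\theta\|)$ remainder. This is typically handled by applying the uniform lower bound on $\bigl\|(\partial_w f)^{-1}\bigr\|$ (inherited from continuity of $\partial_w f$ and the Banach-algebra inequality for inverses) to the identity $f(\theta + \Delta\theta, \varphi(\theta+\Delta\theta)) - f(\theta, \varphi(\theta+\Delta\theta)) = -f(\theta, \varphi(\theta+\Delta\theta))$, bounding the left side by $O(\|\Delta\theta\|)$ and the right side below by a constant times $\|\Delta\varphi\|$. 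Once this estimate is in place, the final chain-rule computation is routine.
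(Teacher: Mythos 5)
Your proposal is correct, but it follows a genuinely different route from the paper. The paper's own proof is only of the derivative formula: it takes the existence, uniqueness, and continuous differentiability of $\varphi$ as given (deferring to the classical literature), differentiates the identity $f(\theta, \varphi(\theta)) = \zero$ via the chain rule, and solves for $\frac{\partial\varphi(\theta)}{\partial\theta}$ using invertibility of $\frac{\partial f}{\partial w}$ --- essentially a three-line computation. You instead give the full classical contraction-mapping proof: the Newton-type map $T_\theta(w) = w - A^{-1} f(\theta, w)$ with $A = \frac{\partial f(\tilde{\theta},\tilde{w})}{\partial w}$, a Banach fixed-point argument on a small ball to get existence and local uniqueness of $\varphi$, a uniform-contraction (or lower-bound) estimate to get local Lipschitz continuity, and then a first-order expansion along $\theta \mapsto (\theta,\varphi(\theta))$, with the Lipschitz bound absorbing $\|\Delta\varphi\|$ into the $o(\|\Delta\theta\|)$ remainder, to obtain differentiability and the formula \eqref{eq:ift} simultaneously. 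Your approach buys self-containedness: it actually establishes the existence and smoothness claims that the theorem asserts and that the paper's argument presupposes. The paper's approach buys brevity: for the algorithmic purpose (defining a vector--Jacobian product at a computed fixed point $w^*$), only the formula is needed, so implicit differentiation of the constraint suffices once the classical theorem is invoked. The one place in your sketch to tighten is the Lipschitz step: the cleanest version bounds $\|\varphi(\theta')-\varphi(\theta)\|$ directly from the uniform $\tfrac12$-contraction, via $\|\varphi(\theta')-\varphi(\theta)\| \le 2\,\|T_{\theta'}(\varphi(\theta)) - T_{\theta}(\varphi(\theta))\| = O(\|\theta'-\theta\|)$, rather than appealing somewhat loosely to a lower bound involving $\bigl\|(\partial_w f)^{-1}\bigr\|$; as written your alternative argument needs the mean-value estimate $\|A^{-1}(f(\theta,w')-f(\theta,w))\| \ge \tfrac12\|w'-w\|$ on the ball to be spelled out, but this is routine and does not affect correctness.
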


We provide a proof in Appendix \ref{sec:proof}.
Note that \eqref{eq:ift} requires only a final point $w^*$ satisfying the constraint and does not require knowledge about $\varphi$ itself. 
Hence, $\varphi$ can be any black-box function outputting~$w^*$.
The gradient of the scalar objective $J$ or $L^{\text{true}}$ is calculated using \eqref{eq:ift}. 
To use backpropagation, we only need to define the product of a vector and $\frac{\partial\varphi(\theta)}{\partial\theta}$. 
We provide an implementation of a custom vector-Jacobian product for the implicit function $\varphi$ in Appendix \ref{sec:jax_ift} allowing to use $\varphi$ as a block in a differentiable computational graph.

\subsection{Benefits under Model Misspecification}
\label{sec:misspec_tab}

In the previous subsection, we showed how to use implicit differentiation for training a model that aims to maximize the expected returns.
In this subsection, we demonstrate that such a control-oriented model is preferable over a likelihood-based in the setting where the true model is not representable by a chosen parametric class.

Let $r_\theta \in \RR^{|\mathcal{S}| \times |\mathcal{A}|}$ and $p_\theta(s'|s,a) \in \RR^{|\mathcal{S}| \times |\mathcal{S}| \times |\mathcal{A}|}$ be a parametric model, where parameters in $p_\theta$ denote the corresponding logits and each parameter in $r_\theta$ is a reward for a state-action pair.
We consider a set of parameters $\{\theta: \|\theta\| \leq \kappa\}$ with the \emph{bounded norm} and use $\kappa$ as a measure of the model misspecification. 
By decreasing the bound of the norm of $\theta$, we get a more misspecified model class.
To isolate the model learning aspect, we consider the exact RL setting without sampling. 
We take a 2 state, 2 action MDP shown in Figure \ref{fig:equiv_mdps_and_bounds} with a discount factor $\gamma = 0.9$ and a uniform initial distribution $\rho_0$. 
For every $\theta$, a function $\varphi$ outputs the corresponding $Q^*$ via performing the fixed point iteration until convergence. 
$Q^*$ is transformed into the policy $\pi_{Q^*}$ via softmax with the temperature $\alpha=0.01$. 
Given the policy $\pi_{Q^*}$, we calculate $J$ in a closed form~\citep{sutton2018reinforcement}.

For OMD, we obtain the gradient of $J$ with respect to $\theta$ using the expression \eqref{eq:grad_params}. 
We then apply the projected gradient ascent where after each step we make a projection on a space of bounded parameters via clipping $\theta$ to $\frac{\kappa}{\|\theta\|} \theta$ if $\|\theta\| > \kappa$.
Finding an MLE solution corresponds to minimizing the average KL divergence
\begin{equation*}
    \label{eq:kl}
    \overline{\text{D}_\text{KL}}(p||p_\theta) = \frac{1}{|\mathcal{S}| \cdot |\mathcal{A}|}\sum_{s,a,s'} p(s'|s,a) \log \frac{p(s'|s,a)}{p_\theta(s'|s,a)}
\end{equation*}
for optimizing $p_\theta$ and minimizing the squared error for $r_\theta$.
We similarly perform the projected gradient descent and call the agent MLE (despite having the exact setting without estimation).

\begin{figure}[t]
\vskip -0.2in
\centering
\centerline{\includegraphics[width=0.5\columnwidth]{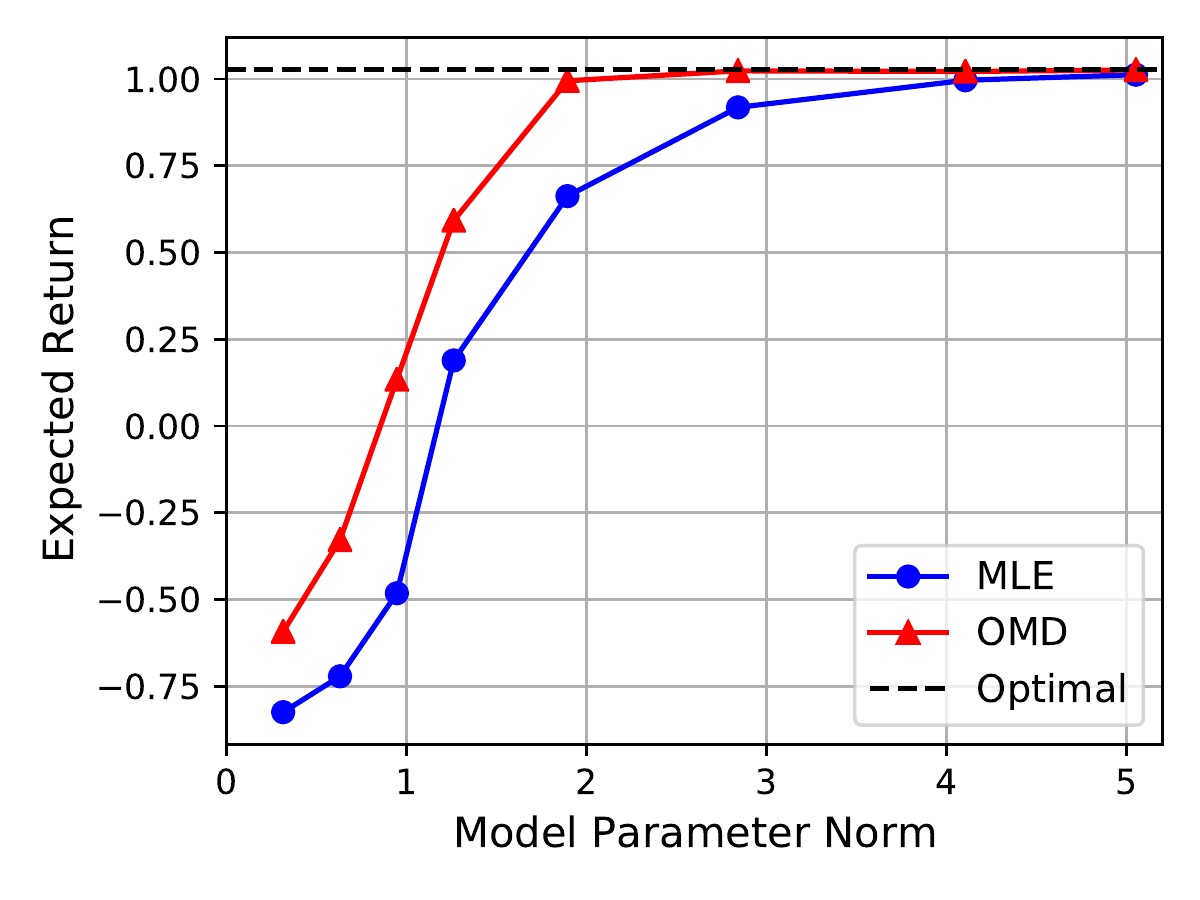}}
\vskip 0.07in
\caption{Expected returns for the tabular MDP under the model class misspecification. The OMD model optimizes the expected returns directly, while the MLE agent minimizes the KL divergence for model learning. OMD outperforms MLE when the model representational capacity is limited.}
\label{fig:misspec_tab}
\vskip -0.17in
\end{figure}
The resulting $J$ as a function of the norm bound $\kappa$ is shown in Figure~\ref{fig:misspec_tab}. 
When the true model is not representable by a chosen class, OMD learns a model that uses its representational capacity for helping the agent to maximize the expected returns, while the MLE agent tries to predict the next states and rewards accurately while discarding the true objective function the agent seeks to optimize.

In MDPs with high-dimensional state spaces~\citep{bellemare2013arcade, beattie2016deepmind} where the underlying dynamics are complex, having a model that will accurately predict the next observation might be expensive and unnecessary for decision making.
Figure~\ref{fig:misspec_tab} reflects the problems an MLE-based model will face for such environments and provides evidence for using control-oriented models that leverage the available capacity of the model more effectively.

\section{Theoretical Analysis}
\label{sec:theory}

In the previous section, we have empirically demonstrated that OMD outperforms Dyna-style~\citep{sutton1991dyna} MBRL agents when the model capacity is limited. 
This section characterizes the set of optimal solutions of OMD and compares the $Q^*$ approximation bounds for OMD and MLE agents.

\subsection{Optimal Solutions for OMD}
\label{sec:analysis}

We use the principle of value equivalence for MBRL~\citep{grimm2020value} and argue that value equivalent models are optimal solutions to \eqref{eq:constr} and \eqref{eq:bellman_outer}. 

\begin{defn}[Optimal value equivalence] Let \Qstar\ be an optimal action-value function for the unconstrained RL problem. The models with parameters $\theta$ and $\theta'$ are \Qstar-equivalent if
\begin{equation}
    B^\theta Q^*(s,a) = B^{\theta'} Q^*(s,a) \enspace \forall s \in \mathcal{S}, a \in \mathcal{A}.
\end{equation}
\end{defn} 

The definition is a slight modification of the value equivalence used in \cite{grimm2020value}: instead of requiring the Bellman operators to be equal for a set of value functions and policies, we require the equality for a chosen \Qstar\ only.
The subset of models that are \Qstar-equivalent forms an \emph{equivalence class}~$\Theta_{Q^*}$.
\begin{prop} If we let the soft Bellman operator~\eqref{eq:bellman_op_model} temperature $\alpha \to 0$ and let $\theta$ be \emph{any} model parameters from the equivalence class $\Theta_{Q^*}$, then $(Q^*, \theta)$ is a solution for \eqref{eq:constr} and \eqref{eq:bellman_outer}.
\end{prop}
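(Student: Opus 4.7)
The plan is to verify feasibility of $(Q^*, \theta)$ for both constrained problems and then argue optimality using two elementary facts valid in the $\alpha \to 0$ limit: (i) the soft Bellman operator $B$ induced by the true model reduces to the hard Bellman optimality operator, whose unique fixed point is $Q^*$; and (ii) the softmax policy $\pi_{Q^*}$ degenerates to a greedy policy with respect to $Q^*$, which is an optimal policy $\pi^*$ of the true MDP. Reading $\Theta_{Q^*}$ as the equivalence class containing the true model (which trivially satisfies $BQ^* = BQ^* = Q^*$ in the limit), the value-equivalence definition immediately yields $B^\theta Q^*(s,a) = BQ^*(s,a) = Q^*(s,a)$ for every $\theta \in \Theta_{Q^*}$ and every $(s,a)$, so $(Q^*, \theta)$ satisfies the common constraint of \eqref{eq:constr} and \eqref{eq:bellman_outer} and is therefore a feasible point of both.

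Optimality for \eqref{eq:bellman_outer} then follows at once: the objective $L^{\text{true}}(Q) = \sum_{s,a}(Q(s,a) - BQ(s,a))^2$ is a non-negative sum of squares, so any $Q$ satisfying $Q = BQ$ is a global minimizer. Since $Q^*$ is such a fixed point in the $\alpha \to 0$ limit, we have $L^{\text{true}}(Q^*) = 0$, and the feasibility established above implies $(Q^*, \theta)$ attains this global minimum for every $\theta \in \Theta_{Q^*}$.

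For \eqref{eq:constr}, I would reduce the argument to the classical fact that $J(\pi) \leq J(\pi^*)$ for every policy $\pi$, where $\pi^*$ is an optimal policy for the true MDP. As $\alpha \to 0$, the softmax policy $\pi_{Q^*}$ places all its mass on $\arg\max_a Q^*(s,\cdot)$, hence is a (possibly stochastic) greedy policy with respect to $Q^*$ and is therefore optimal, giving $J(\pi_{Q^*}) = J(\pi^*) = \max_\pi J(\pi)$. Since the right-hand side is an unconditional maximum over all policies, it also upper bounds the objective at every feasible point $(Q, \theta)$ of \eqref{eq:constr}, so $(Q^*, \theta)$ is a maximizer.

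The main obstacle, and essentially the only nontrivial step, is the careful treatment of the $\alpha \to 0$ limit: one needs to argue that the fixed points of the parameterized soft operator converge to the fixed point of the hard Bellman operator, and that the Boltzmann distribution with temperature $\alpha$ over $Q^*(s,\cdot)$ concentrates on $\arg\max_a Q^*(s,a)$. Tie-breaking in the $\arg\max$ is benign, since any convex combination of optimal actions is still an optimal (stochastic) policy. Once these limiting identifications are in place, the rest of the proof is a direct reading of the definitions of optimal value equivalence and of the two objectives.
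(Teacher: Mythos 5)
Your proposal is correct and follows essentially the same route as the paper: feasibility of $(Q^*,\theta)$ from the $\alpha\to 0$ limit recovering the hard Bellman operator together with the definition of $Q^*$-equivalence, and optimality because $Q^*$ attains the unconstrained optimum of each objective ($L^{\text{true}}(Q^*)=0$ and $J(\pi_{Q^*})=\max_\pi J(\pi)$). The paper's own argument is just a terser ``by construction'' version of this; your explicit treatment of the temperature limit and of tie-breaking in the greedy policy merely fills in details the paper leaves implicit.
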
 

This property holds by construction. The optimal Q-function maximizes the objective in the true MDP. As the log-sum-exp temperature in \eqref{eq:bellman_op_model} approaches 0, we recover the ``hard'' target in the Bellman optimality operator:
\begin{equation}
    \label{eq:lim}
    \lim_{\alpha \to 0} \alpha \log \sum_{a'} \exp \frac{1}{\alpha} Q(s', a') = \max_{a'} Q(s', a').
\end{equation}

Thus, if we set $\theta$ to the true model, $Q^*$ will satisfy the Bellman equation $Q^*(s,a) = B^\theta Q^*(s,a)$.
But even though the true model belongs to the equivalence class $\Theta_{Q^*}$, it is \emph{not identifiable}: all models from $\Theta_{Q^*}$ are going to be indistinguishable for OMD. 
Seemingly undesirable at first glance, it allows OMD choosing \emph{any} model that induces the same Bellman operator, which is beneficial under the model misspecification as shown in Section \ref{sec:misspec_tab}.

We provide an example of a model that is \Qstar-equivalent with the true model in Figure~\ref{fig:equiv_mdps_and_bounds}. 
The model differs significantly, demonstrating that the equivalence class $\Theta_{Q^*}$ consists of multiple elements. 
Moreover, the dynamics learned by OMD are deterministic, suggesting that OMD can choose \emph{a simpler} model that will have the same $Q^*$ as the true model. 
Drawing the connection to the prior work on state abstractions~\citep{li2006towards}, the fact that MDPs have the same optimal action values indicates that the learned models can be seen as $Q^*$-irrelevant with respect to a state abstraction over $\mathcal{S}$.

\begin{figure}[t]
\centering
\begin{subfigure}{0.48\textwidth}
	\centerline{\includegraphics[width=\textwidth]{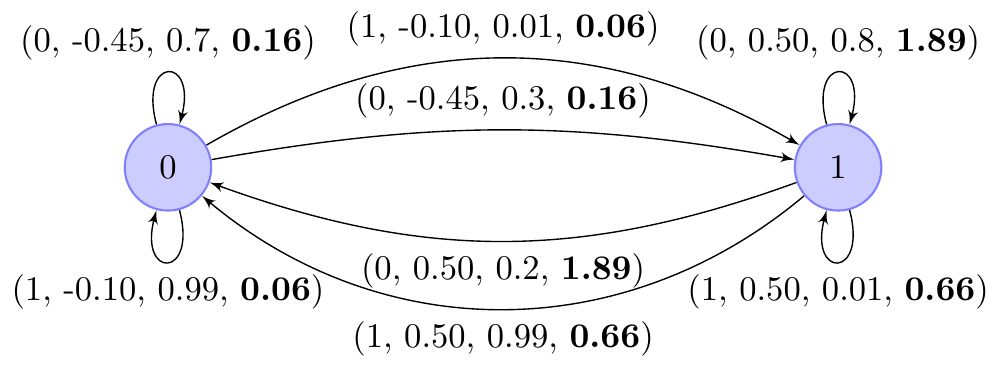}}
    \centerline{\includegraphics[width=\textwidth]{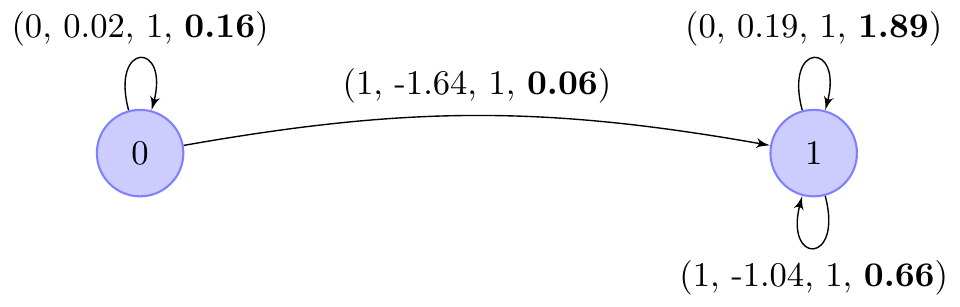}}
\end{subfigure}
~
\begin{subfigure}{0.48\textwidth}
	\centerline{\includegraphics[width=\textwidth]{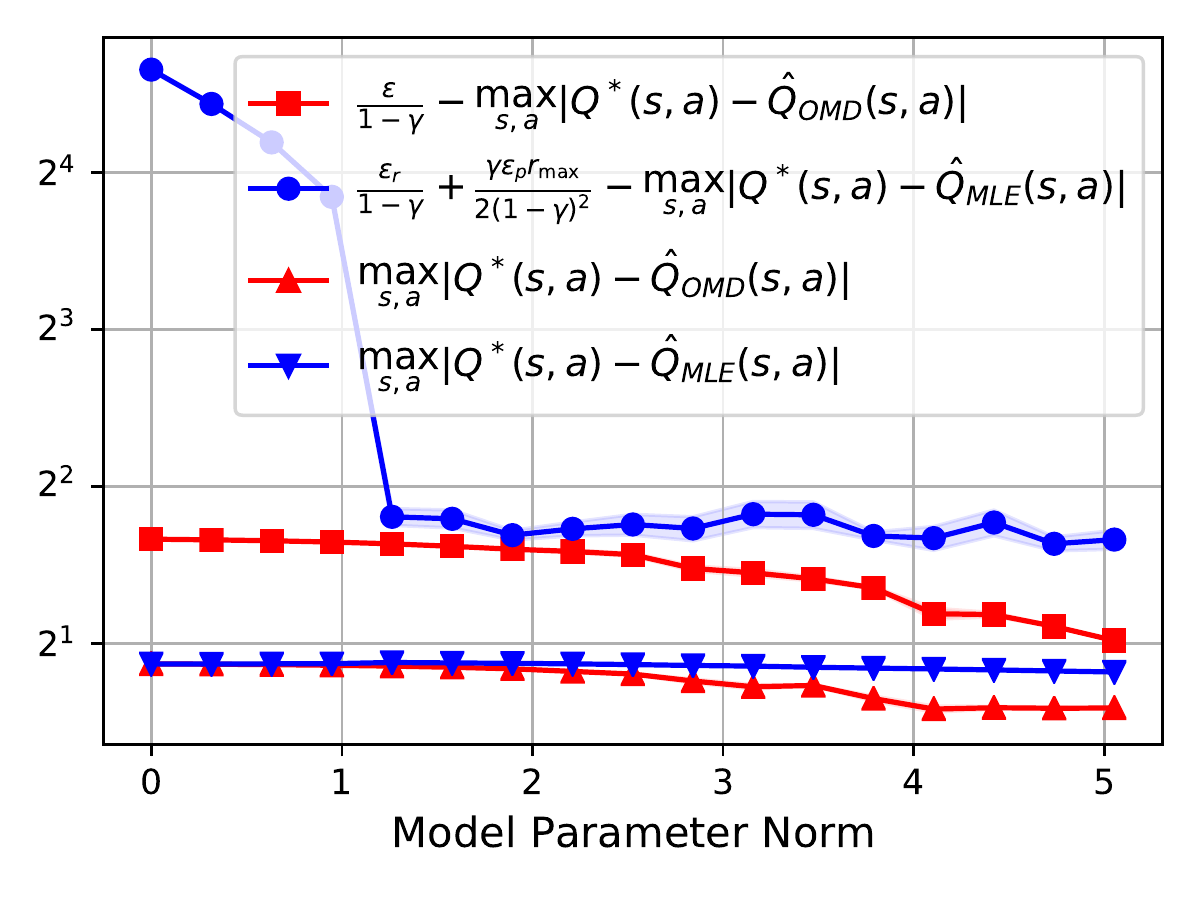}}
\end{subfigure}
\vskip 0.05in
\caption{\textbf{Left:} Two different MDPs with the same optimal Q-function (a fixed point of the induced Bellman operator). Circles represent states, tuples are organized as (action, reward, transition probability, optimal Q value). Top: the original MDP taken from \citep{dadashi2019value}. Bottom: an MDP with a trained OMD model. \textbf{Right:} $Q^*$ approximation error and tightness of the error bounds under the model misspecification. Given a limited model representation capacity, OMD agent approximates $Q^*$ more accurately and enjoys a tighter bound.}
\label{fig:equiv_mdps_and_bounds}
\vskip -0.07in
\end{figure}

\subsection{Approximation Bound}
\label{sec:bounds}
Our next result relates approximation errors for the optimal Q-functions under the OMD and MLE models. For simplicity, we analyze the setting with $\alpha \to 0$ and the Bellman error \eqref{eq:bellman_outer} as the objective. 

\begin{restatable}{theorem}{qstarerror} \emph{(\Qstar\ approximation error)} Let $Q^*$ be the optimal action-value function for the true MDP. Let $\hat{Q}_\mathrm{OMD}$ and $\hat{Q}_\mathrm{MLE}$ be the fixed points of the Bellman optimality operators for approximate OMD and MLE models respectively. Then,
\begin{itemize}
\item If the MLE dynamics $\hat{p}$ and reward $\hat{r}$ have the bounded errors 
$\max_{s, a} \left\|p(\cdot|s,a) - \hat{p}(\cdot|s,a)\right\|_1 = \epsilon_p$ and $\max_{s,a}\left|r(s,a) - \hat{r}(s,a)\right| = \epsilon_r$, 
and the reward function is bounded $r(s,a) \in \left[0, r_{\max}\right] \enspace \forall s,a$, we have
\begin{equation*}
    \max_{s,a} \left|Q^*(s,a) - \hat{Q}_\mathrm{MLE}(s,a)\right| \le \frac{\epsilon_r}{1-\gamma} + \frac{\gamma \epsilon_p \Rmax}{2(1-\gamma)^2};
\end{equation*}
\item If the Bellman optimality operator induced by the OMD model $\hat{\theta}$ has the bounded error 
$\max_{s,a} \left|B\hat{Q}_{\mathrm{OMD}}(s,a) - B^{\hat{\theta}} \hat{Q}_{\mathrm{OMD}}(s,a)\right| = \epsilon$, 
we have
\begin{equation*}
    \max_{s,a} \left|Q^*(s,a) - \hat{Q}_{\mathrm{OMD}}(s,a)\right| \le \frac{\epsilon}{1-\gamma}.
\end{equation*}
\end{itemize}
\end{restatable}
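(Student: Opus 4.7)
The plan is to prove both bounds using the same overall template: both $Q^\star$ and the approximate fixed points are characterized as solutions of $Q = TQ$ for some Bellman operator $T$, and the sup-norm error is controlled by a triangle inequality combined with the $\gamma$-contraction of the true Bellman operator $B$. Throughout I will work in the $\alpha \to 0$ limit so that $B$ and $B^{\hat\theta}$ are the hard Bellman optimality operators, each of which is a $\gamma$-contraction in $\|\cdot\|_\infty$ (equation~\eqref{eq:lim} justifies the limit).

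I would do the OMD part first because it is essentially a one-line telescoping argument. Starting from $Q^\star = B Q^\star$ and $\hat Q_{\mathrm{OMD}} = B^{\hat\theta} \hat Q_{\mathrm{OMD}}$, insert the pivot $B\hat Q_{\mathrm{OMD}}$:
\begin{equation*}
\bigl\|Q^\star - \hat Q_{\mathrm{OMD}}\bigr\|_\infty
= \bigl\|B Q^\star - B^{\hat\theta} \hat Q_{\mathrm{OMD}}\bigr\|_\infty
\le \bigl\|B Q^\star - B \hat Q_{\mathrm{OMD}}\bigr\|_\infty
+ \bigl\|B \hat Q_{\mathrm{OMD}} - B^{\hat\theta} \hat Q_{\mathrm{OMD}}\bigr\|_\infty .
\end{equation*}
The first term is at most $\gamma \|Q^\star - \hat Q_{\mathrm{OMD}}\|_\infty$ by contraction of $B$, and the second term is at most $\epsilon$ by hypothesis. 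Rearranging gives the stated bound $\epsilon/(1-\gamma)$.

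For the MLE part the template is identical but the residual term $\|B \hat Q_{\mathrm{MLE}} - \hat B \hat Q_{\mathrm{MLE}}\|_\infty$, where $\hat B$ is the Bellman operator of the MLE model $(\hat r, \hat p)$, must be bounded using the reward and transition errors separately. At a fixed $(s,a)$ I would split
\begin{equation*}
\bigl| B \hat Q(s,a) - \hat B \hat Q(s,a) \bigr|
\le \bigl| r(s,a) - \hat r(s,a) \bigr|
+ \gamma \bigl| \mathbb{E}_{p(\cdot|s,a)} V_{\hat Q}(s') - \mathbb{E}_{\hat p(\cdot|s,a)} V_{\hat Q}(s') \bigr|,
\end{equation*}
where $V_{\hat Q}(s') = \max_{a'} \hat Q(s',a')$. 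The reward term is bounded by $\epsilon_r$. For the transition term I would use the standard inequality $|\mathbb{E}_p f - \mathbb{E}_{\hat p} f| \le \tfrac12 \|p - \hat p\|_1 \cdot \mathrm{span}(f)$, together with the range bound $V_{\hat Q}(s') \in [0, \Rmax/(1-\gamma)]$ (which follows from $\hat r \in [0, \Rmax]$ after noting the error assumption plus nonnegativity of the true reward, or, more cleanly, from redoing the argument with $\hat Q$ replaced by $Q^\star$ which is genuinely in that range). This yields $\gamma \epsilon_p \Rmax / (2(1-\gamma))$ for the transition term, and after the same contraction/rearrangement as before we obtain $\epsilon_r/(1-\gamma) + \gamma \epsilon_p \Rmax / (2(1-\gamma)^2)$.

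The main obstacle is the span bound in the MLE case. Strictly speaking $\hat Q_{\mathrm{MLE}}$ lives in $[0, \hat r_{\max}/(1-\gamma)]$ where $\hat r_{\max}$ could exceed $\Rmax$ by $\epsilon_r$; to get the stated constant cleanly one either uses the equivalent pivot $\|B Q^\star - \hat B Q^\star\|_\infty$ (so the span of $V_{Q^\star}$ is directly $\Rmax/(1-\gamma)$) and then applies contraction of $\hat B$ to close the recursion, or one absorbs the $\epsilon_r$ slack into the first term of the bound. I would use the $Q^\star$ pivot because it makes the span argument immediate and avoids fighting with the $\hat r$ range. Everything else is bookkeeping: the $1/(1-\gamma)$ factors come from the single geometric series induced by the contraction, and the $1/2$ in front of $\|p-\hat p\|_1$ comes from the tight form of Hölder's inequality for total variation distance.
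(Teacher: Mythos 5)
Your proposal is correct and follows the same core template as the paper: express both functions as fixed points, insert a pivot, use the triangle inequality, bound the residual model-induced term, and close a geometric recursion via $\gamma$-contraction; your OMD argument is essentially identical to the paper's (the paper just spells out the contraction of $B$ step by step instead of citing it). The one genuine difference is in the MLE half. The paper proves a lemma bounding $\|B\hat{Q}-\hat{B}\hat{Q}\|_\infty$ for a generic $Q$ via the half-$\ell_1$/span trick and then applies it at $Q=\hat{Q}_{\mathrm{MLE}}$, combined with the contraction of the \emph{true} operator $B$ — silently using $0\le \hat{Q}_{\mathrm{MLE}}\le \Rmax/(1-\gamma)$, which, as you observe, is not strictly guaranteed when $\hat{r}$ may exceed the true reward range by $\epsilon_r$. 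You instead pivot at $Q^*$, bounding $\|BQ^*-\hat{B}Q^*\|_\infty$ (where the span of $\max_{a'}Q^*(\cdot,a')$ is genuinely at most $\Rmax/(1-\gamma)$) and closing the recursion with the contraction of $\hat{B}$ rather than $B$. This yields the identical constants while avoiding the implicit range assumption, so your variant is a slightly cleaner route to the same bound; the paper's route has the small advantage of isolating the reusable ``Bellman operator error'' lemma, which it also uses to argue that OMD directly optimizes the quantity MLE only upper-bounds.
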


We prove the bounds in Appendix~\ref{sec:bounds_proof} using similar arguments as the proof of the simulation lemma \citep{kearns2002near}. 
The MLE bound has a $\frac{1}{(1-\gamma)^2}$ term making the bound loose compared to the OMD bound with only the $\frac{1}{1-\gamma}$ term.
The bound suggests that OMD approximation error translates into a lower $Q^*$ approximation error than for the MLE model. 
Figure~\ref{fig:equiv_mdps_and_bounds} compares empirically the errors and the tightness of the bounds for a tabular MDP where $Q^*$ can be computed exactly.
The result provides evidence that OMD indeed achieves a lower $Q^*$ approximation error compared to an agent that seeks to estimate $p$ and $r$ accurately.
Motivated by the theoretical findings, the next section discusses a practical version of OMD.

\begin{algorithm}[tb]
   \caption{Model Based RL with Optimal Model Design}
   \label{alg:omd}
\begin{algorithmic}
   \STATE {\bfseries Input:} Initial parameters $w$ and $\theta$, empty replay buffer $\mathcal{D}$.
   \REPEAT
   \STATE Set $s$ to the current state, sample an action $a$ using softmax over $Q_w(s,a)$.
   \STATE Take the action $a$, observe $r = r(s,a), s' \sim p(s'|s,a)$, add $(s,a,s',r)$ to $\mathcal{D}$.
   \FOR{$i=1$ {\bfseries to} $K$}
   \STATE Sample $(s,a)$ from $\mathcal{D}$, apply the model to get $r = r_\theta(s,a), s' \sim p_\theta(s'|s,a)$. 
   \STATE Update $Q_w$ parameters $w$ to minimize $L(\theta, w)$.
   \ENDFOR
   \STATE {\color{blue} Update model parameters $\theta$ according to \eqref{eq:grad_final}.}
   \UNTIL{the maximum number of interactions is reached}
\end{algorithmic}
\end{algorithm}

\section{OMD with Function Approximation} 
\label{sec:omd_cont}

Section \ref{sec:omd_tab} describes optimal model design, a non-likelihood-based method for learning models in tabular MDPs. 
In this section, we propose several approximations to make OMD practically applicable. 
We analyze the effect of the approximations and perform an ablation study in Appendix~\ref{sec:ablation}.

\textbf{Q-network.} We use a neural network with parameters $w$ to approximate the Q-values. The network is trained to minimize the Bellman error induced by the model $\theta$:
\begin{equation}
    \label{eq:bellman_err_model}
    L(\theta, w) \triangleq \mathbb{E}_{s,a} [Q_w(s, a) - B^{\theta} Q_{\bar{w}}(s, a)]^2 \to \min_w,
\end{equation}
where $\bar{w}$ is a target copy of parameters $w$ updated using exponential moving average, a standard practice to increase the stability of deep Q-learning \citep{mnih2015human}. We also use double Q-learning \cite{hasselt2010double, fujimoto2018addressing} but omit it from the equations for simplicity. To estimate the expectation, we use a replay buffer \citep{mnih2015human}. 

\textbf{Constraint.} The constraint in \eqref{eq:constr} and \eqref{eq:bellman_outer} should be satisfied for all state-action pairs making it impractical for non-tabular MDPs. We introduce an alternative but similar constraint, the first-order optimality condition for minimizing the Bellman error~\eqref{eq:bellman_err_model}: $\frac{\partial L(\theta, w)}{\partial w} = \mathbf{0}.$

\textbf{Implicit differentiation.} The process of training $\theta$ is bi-level: in the inner loop, we optimize the Q-function parameters to get optimal $w^*$ corresponding to a fixed model $\theta$; in the outer loop, we make a gradient update of $\theta$. We make $K$ steps of an optimization method to approximate $w^* = \varphi(\theta)$ where $K$ is a hyperparameter and reuse the weights from the previous outer loop iterations. We follow \citet{rajeswaran2020game} and approximate the inverse Jacobian term in $\frac{\partial \varphi(\theta)}{\partial \theta}$ with the identity matrix. Surprisingly, we did not observe benefits when using the inverse Jacobian term. We investigate the phenomenon deeper and discuss possible explanations in Appendix \ref{sec:ift_sens}.

\textbf{Objective.} We consider the problem \eqref{eq:bellman_outer} and use the Bellman error as the outer loop objective:
\begin{equation}
    \label{eq:bellman_err_true}
    L^{\text{true}}(w) \triangleq \mathbb{E}_{s,a} [Q_w(s, a) - B Q_{\bar{w}}(s, a)]^2,
\end{equation}
where $B$, again, is the soft Bellman optimality operator induced by the true reward $r$ and dynamics $p$.

Note that the objective~\eqref{eq:bellman_err_true} is used for estimating the gradient with respect to $\theta$ \emph{only} and $w$ is trained to optimize $L(\theta, w)$. While both $L^{\text{true}}$ and $J$ objectives could be used for training $\theta$, we found that the latter requires more samples to converge. Note that optimizing the $L^{\text{true}}$ still corresponds to maximizing the (entropy-regularized) expected returns.

\textbf{Resulting gradient.} 
The changes above in the objective function and the constraint yield the following optimization problem:
\begin{equation}
    \label{eq:omd_final}
    \min_{w, \theta} L^{\text{true}}(w) \qquad
    \text{s.t. } \frac{\partial L(\theta, w)}{\partial w} = \mathbf{0}.
\end{equation}

The Q-function and IFT approximations and result in the following gradient with respect to the model parameters:
\begin{equation}
    \label{eq:grad_final}
    \frac{\partial L^{\text{true}}(\theta)}{\partial \theta}
    \approx
    - 
    \underbrace{\frac{\partial L^{\text{true}}(w^*)}{\partial w}}_{\text{grad Bellman}} \cdot 
    \underbrace{\frac{\partial^2 L(\theta, w^*)}{\partial \theta \partial w}}_{\text{approx IFT}} \Big\rvert_{w^* = \varphi(\theta)}
\end{equation}

The OMD algorithm is summarised in Algorithm \ref{alg:omd}. The only difference between Dyna-based approaches and OMD (highlighted in blue) is given by the gradient used to train model parameters $\theta$.

\begin{figure}[t]
\centering
\begin{subfigure}{0.48\textwidth}
	\centerline{\includegraphics[width=\textwidth]{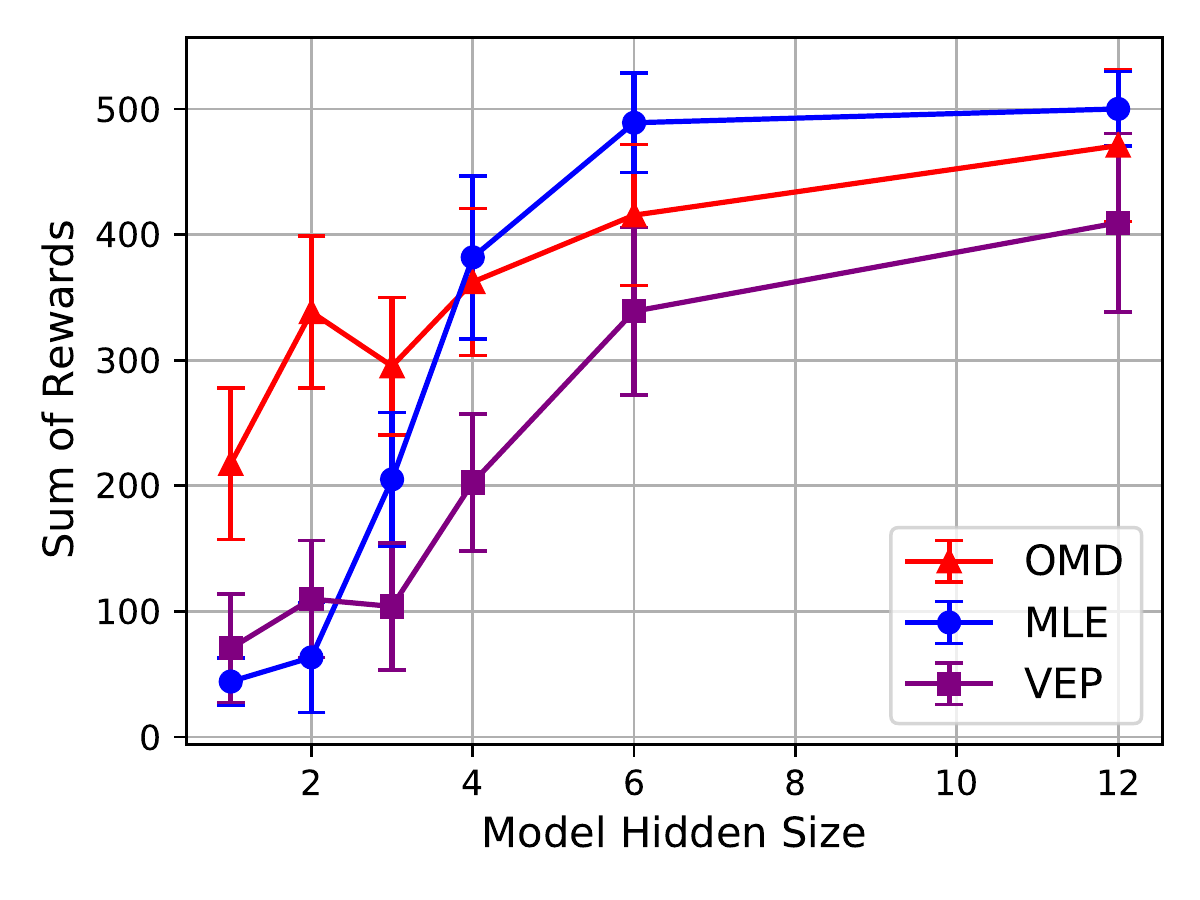}}
\end{subfigure}
~
\begin{subfigure}{0.48\textwidth}
	\centerline{\includegraphics[width=\textwidth]{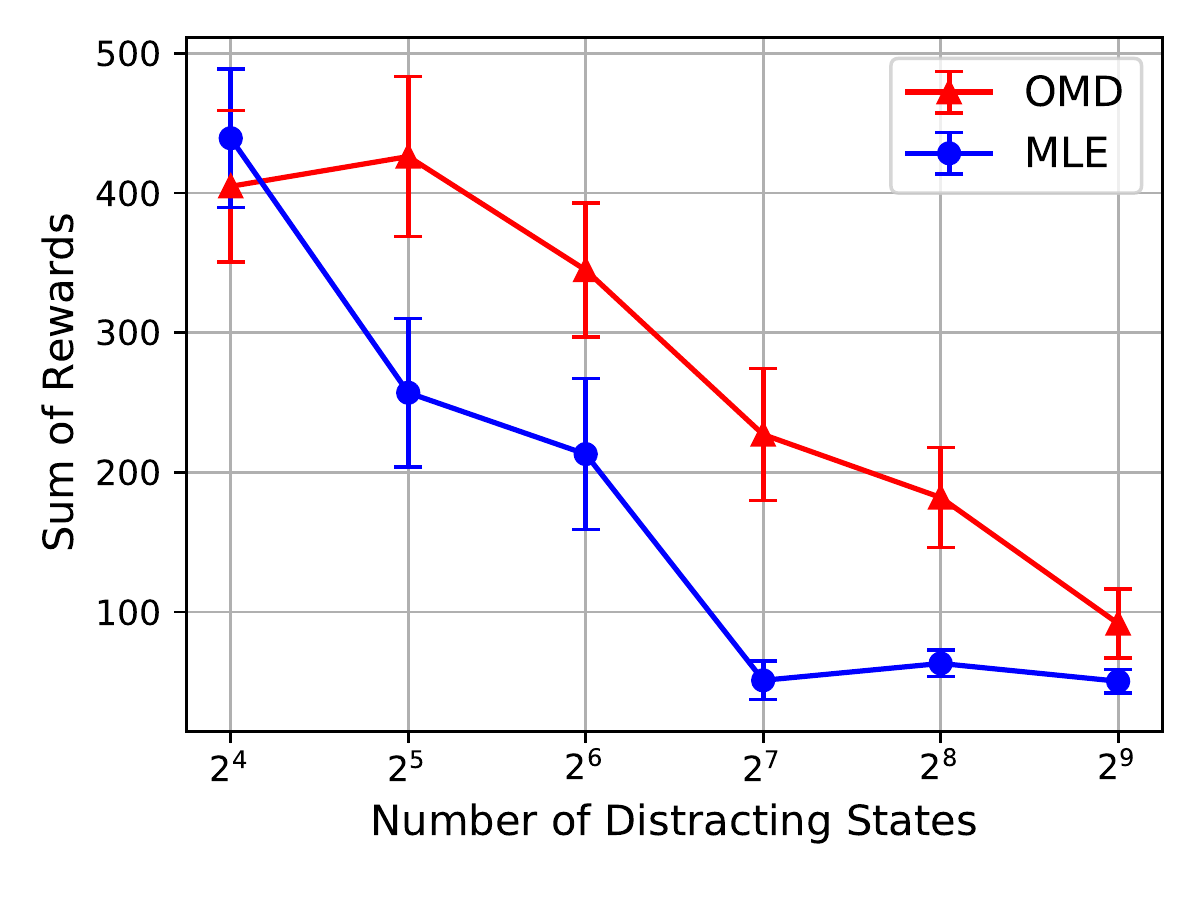}}
\end{subfigure}
\vskip -0.1in
\caption{\textbf{Left:} The final performance of the agents in CartPole for varying hidden dimensionality of the model networks. The OMD model makes useful predictions under the model misspecification. 
\textbf{Right:} The performance when the state space is augmented with uninformative noise. OMD is more robust as the number of distractor increases, while VEP fails for any positive number of distractors. The error bar is the standard error measured over 10 random seeds.}
\label{fig:misspec_approx}
\vskip -0.1in
\end{figure}

\section{Experiments with Function Approximation} 
\label{sec:exp}

This section aims to test the following hypotheses:
\begin{itemize}
    \item The OMD agent with approximations from Section~\ref{sec:omd_cont} achieves near-optimal returns.
    \item The performance of OMD is better compared to MLE under the model misspecification.
    \item The parameters $\theta$ of the OMD model have low likelihood, yet the agent that acts using the Q-function trained with the model achieves near-optimal returns in the true MDP.
\end{itemize}

\textbf{Setup}. 
We provide full details about the experimental setup and hyperparameters in Appendix~\ref{sec:setup}. 
We choose CartPole~\citep{barto1983neuronlike} to have controllable experiments but also include results on MuJoCo HalfCheetah~\citep{todorov2012mujoco} with similar findings in Appendix~\ref{sec:mujoco} further supporting our conclusions. 
Since OMD learns one of the $Q^*$-equivalent models as shown in Section~\ref{sec:analysis}, a close non-MLE baseline would be the algorithm used in the value equivalence principle~(VEP) paper~\citep{grimm2020value}. 
The VEP model minimizes the difference between the Bellman operators:
\begin{equation}
    \label{eq:vep_loss}
    \ell_{\text{VEP}}(\theta) = \sum_{\pi \in \Pi}\sum_{V \in \mathcal{V}}\sum_{s \in \mathcal{S}}\left( B_\pi V(s) - B_\pi^\theta V(s) \right)^2,
\end{equation}
where $B_\pi^\theta V(s) = \mathbb{E}_{a \sim \pi(a|s), s' \sim p_\theta(s'|s,a)} \left(r_\theta(s,a) + \gamma V(s') \right)$, $B_\pi$ is the real model counterpart estimated from samples, and $\Pi$ and $\mathcal{V}$ are predefined sets of policies and value functions.

\textbf{Performance under model misspecification}. 
We design two experiments that allow measuring the misspecification in isolation. First, we limit the model class representational capacity by controlling the number of units in hidden layers of the model. Next, we add distracting states by sampling noise from a standard gaussian and vary the number of distractors. 
Figure~\ref{fig:misspec_approx} shows the returns achieved by the agents after training in the two regimes. 
Note that the Q-function is updated using only the next states and rewards produced by the model, and even when the hidden dimensionality of the model is~1, the OMD model encodes useful information for taking optimal actions. 
Returns achieved by OMD are also more robust to the distractors indicating that the MLE focuses on predicting the parts of a state that might not be relevant for decision making.
The relatively poor performance of VEP suggests that learning a model to predict values for a fixed set of policies and value functions is not as effective, especially if some states are non-informative. 
The experiments reflect the challenges an MBRL agent will face in complex domains such as~\citep{bellemare2013arcade, beattie2016deepmind, kalashnikov2021mt}: accurately predicting the next observations can be infeasible because the underlying dynamics can be too involved and there might be few components that are important for taking action. 
Figure~\ref{fig:misspec_approx} provides evidence that using control-oriented methods would allow using the model capacity more effectively.

\begin{figure}[t]
\centering
\begin{subfigure}{0.32\textwidth}
	\centerline{\includegraphics[width=\textwidth]{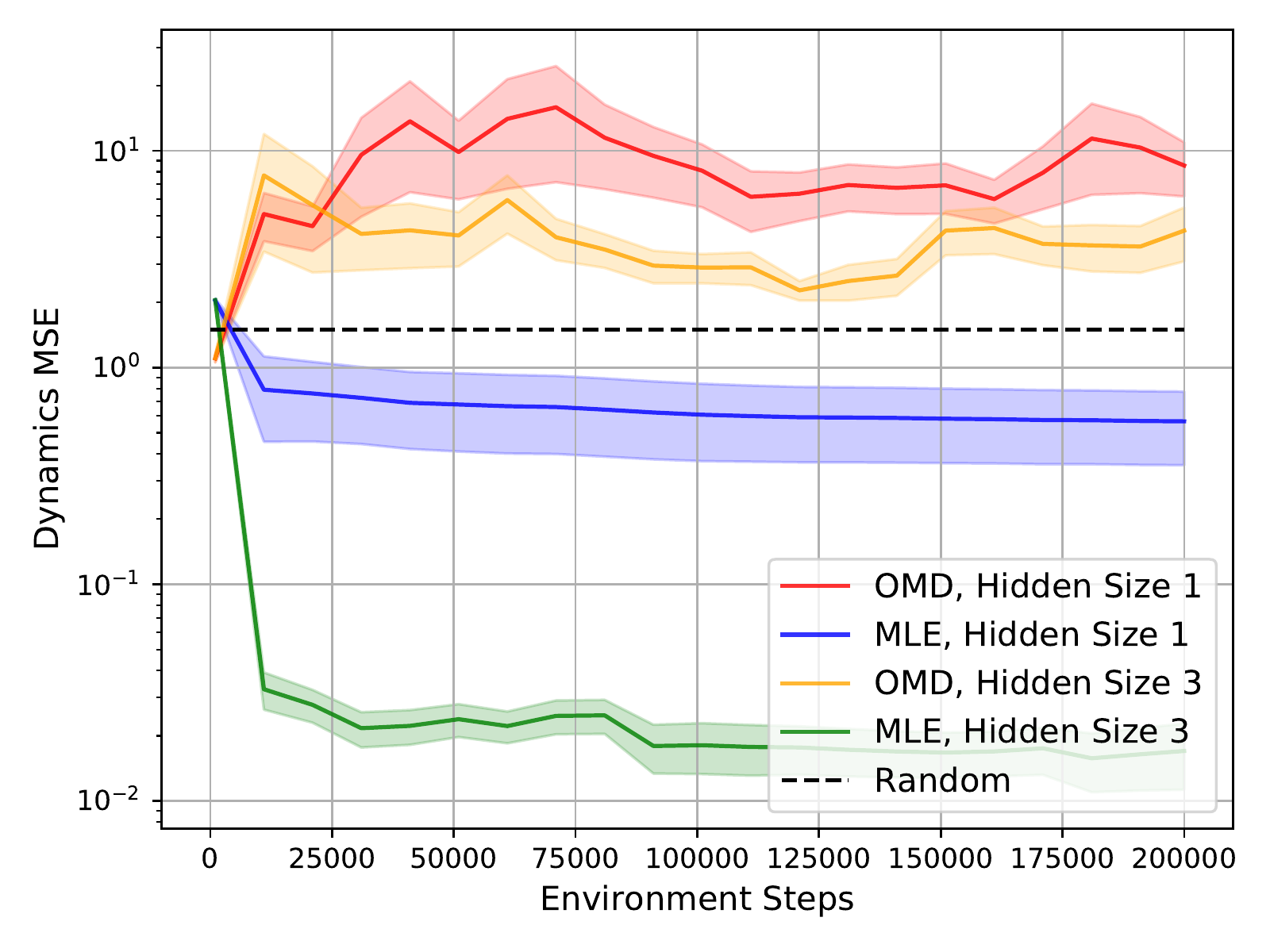}}
\end{subfigure}
~
\begin{subfigure}{0.32\textwidth}
	\centerline{\includegraphics[width=\textwidth]{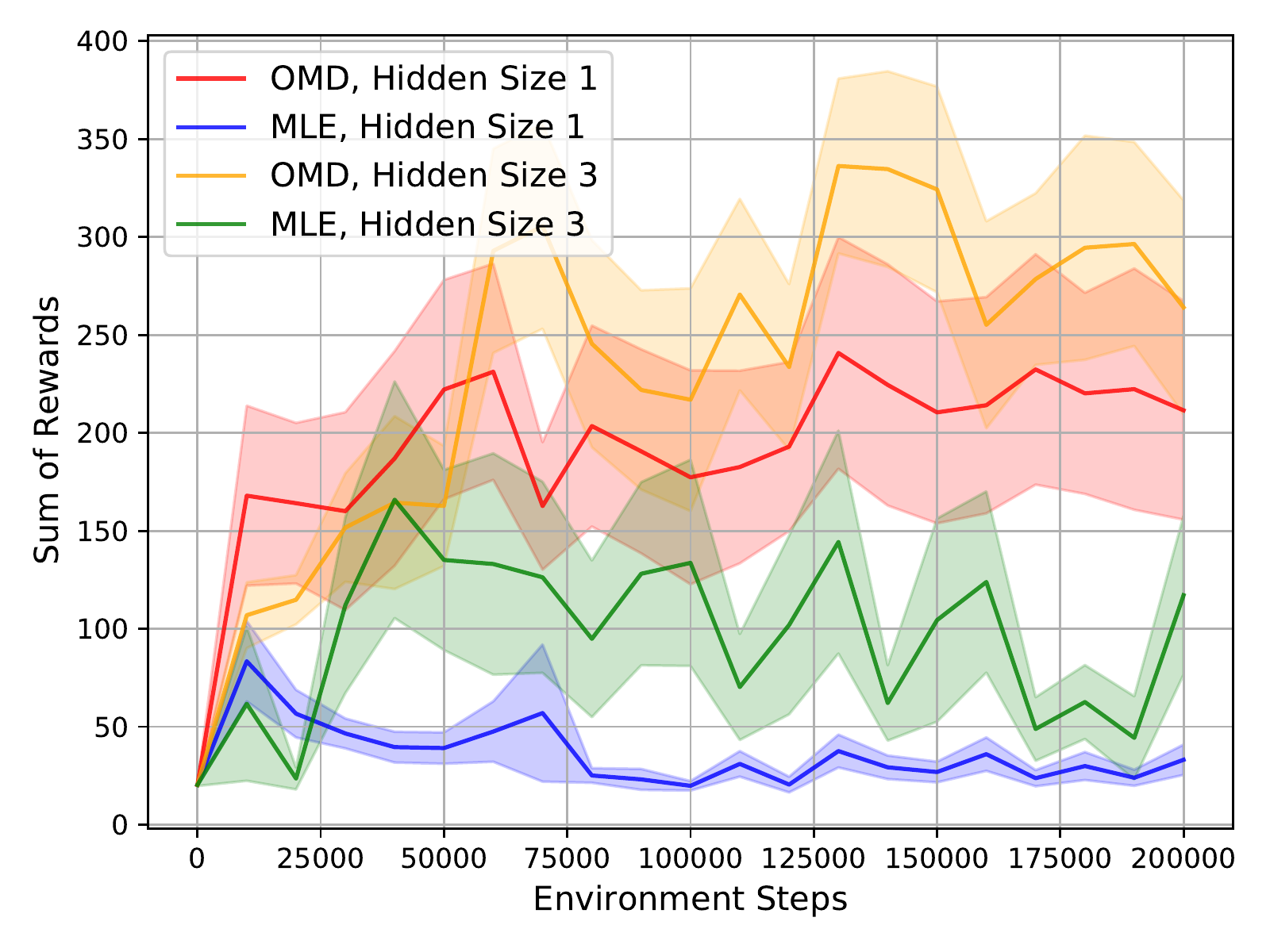}}
\end{subfigure}
~
\begin{subfigure}{0.32\textwidth}
	\centerline{\includegraphics[width=\textwidth]{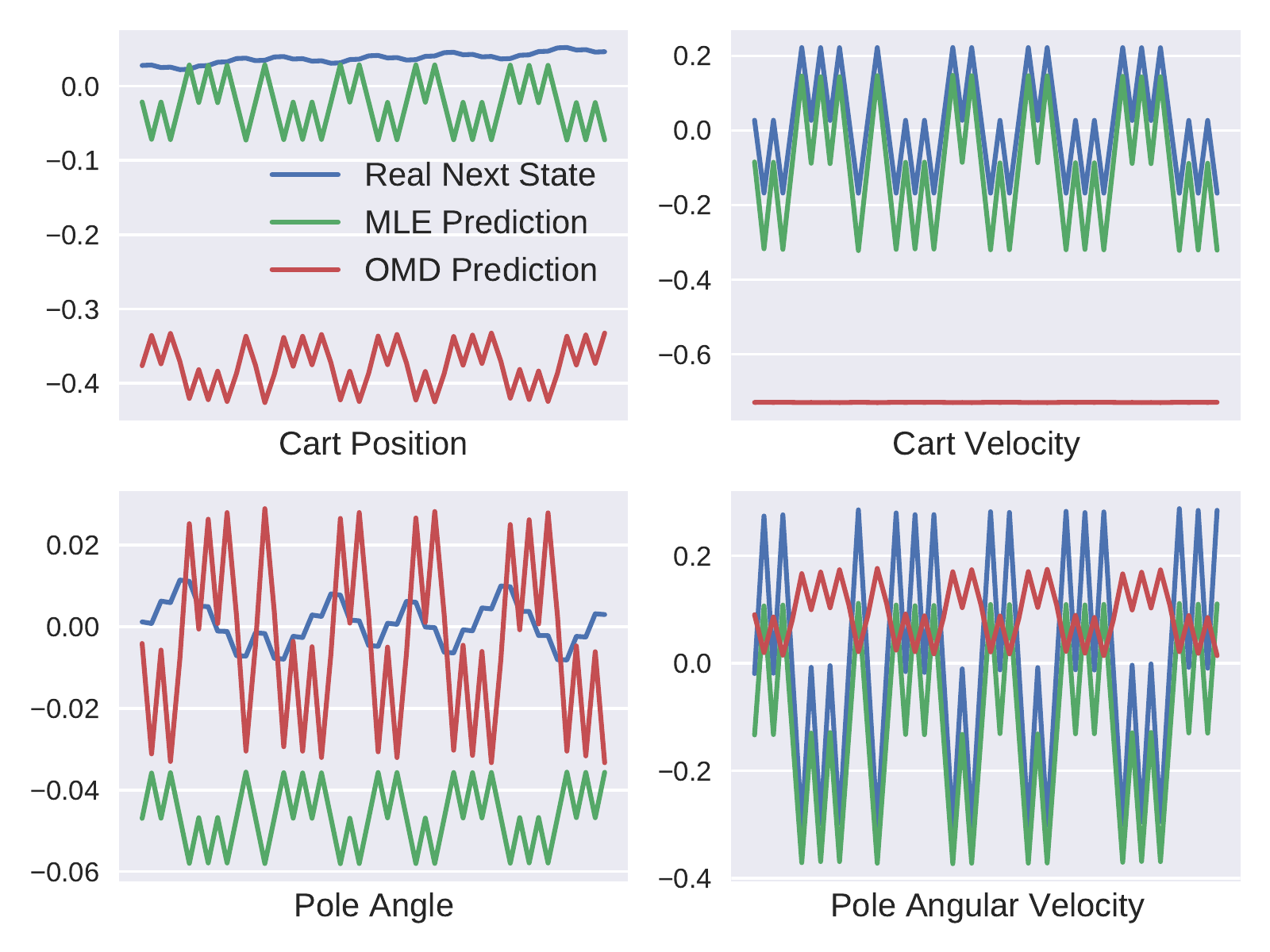}}
\end{subfigure}

\caption{\textbf{Left:} Mean squared error~(MSE) of the next state prediction of OMD and MLE models in CartPole under model misspecification~(number of hidden units 1 and 3). The shaded region is the standard error over 10 runs. 
\textbf{Center:} The corresponding returns of OMD and MLE agents. OMD returns are larger than the returns of MLE even though MLE predicts the next states more accurately. 
\textbf{Right:} Next state predictions of OMD and MLE models with hidden size 1. The OMD agent discards Cart Velocity and predicts unrealistic Cart Position but achieves the optimal returns of 500.}
\label{fig:mse_and_rollouts}
\vskip -0.1in
\end{figure}

\textbf{Likelihood of OMD model.}
We show the mean squared error (MSE) of OMD and MLE dynamics predictions in Figure \ref{fig:mse_and_rollouts}.
Quantitatively, the MSE of OMD models is \emph{higher than the MSE of a randomly initialized model}, while OMD achieves higher returns than MLE.
This finding suggests that the dynamics might not need to produce predictions close to the true states to be useful for planning.

Qualitatively, we visualize individual coordinates of the predictions under the model misspecification in Figure \ref{fig:mse_and_rollouts}. We predict the immediate next states given a sequence of states and actions from an optimal trajectory. We use one of the runs of the OMD agent with the hidden size 1 that achieves optimal returns of 500 and observe that Cart Velocity predictions are nearly constant. On the other hand, an MLE model with the hidden size 1 spends its limited capacity to predict the fluctuations in Cart Velocity leading to a significant deterioration in obtained returns. 

Overall, these findings suggest that the OMD agent achieves near-optimal returns, performs better than the MLE-based MBRL agent as well as VEP under model misspecification, and learns a model that is useful for control despite having low likelihood.

\section{Discussion and Future Work}
\label{sec:future}

An exciting direction for future work is the extension of OMD to environments with image-based observations where model misspecification naturally arises. 
We expect that for complex visual domains, learning a control-oriented model should be more effective compared to model-based methods that rely on reconstruction~\citep{kaiser2019model, hafner2020mastering}.
Based on Figure~\ref{fig:mse_and_rollouts}, we also conjecture that OMD can learn an abstract model that can ignore parts of the original state space that are irrelevant for control. This might allow applying OMD in a zero-shot manner for transfer learning tasks where the underlying dynamics remain unchanged.

Implicit differentiation is not the only way to solve the described constrained optimization problems. Other alternatives include using Lagrangian methods as proposed for the tabular case in \citep{baconlagrangian}. 
Since extending the approach to non-tabular MDPs would require an additional approximator for Lagrange multipliers, we conjecture that finding a saddle point
is going to be less stable than using the IFT. 

Finally, it is worth theoretically studying the sensitivity of the IFT to the approximations to the inverse Jacobian term and the inner loop solutions. 
Our findings, as well as findings of \citep{rajeswaran2019meta, rajeswaran2020game, lorraine2020optimizing} suggest that there is a gap between the assumptions of the IFT and its applicability in practice.

\section{Conclusion}
\label{sec:conclusion}

The paper proposes \emph{optimal model design}~(OMD), a method for learning control-oriented models that addresses the shortcomings of likelihood-based MBRL approaches. OMD optimizes the expected returns in an end-to-end manner and alleviates the objective mismatch of standard MBRL methods that train models using a proxy of the true RL objective. Theoretically, we characterize the set of optimal solutions to OMD and illustrate the efficacy of OMD over MLE agents for approximating optimal value functions. Empirically, we introduce approximations to apply OMD to non-tabular environments and demonstrate the improved performance of OMD in settings with limited model capacity. Perhaps surprisingly, we find that the OMD model can have low likelihood, yet the model is useful for maximizing returns. Overall, OMD sheds light on the potential of control-oriented models for model-based reinforcement learning.

\begin{ack}

EN thanks Iurii Kemaev and Clement Gehring for invaluable help with JAX; Tristan Deleu, Gauthier Gidel, Amy Zhang, Aravind Rajeswaran, Ilya Kostrikov, Brandon Amos, and Aaron Courville for insightful discussions; Pierluca D'Oro, David Brandfonbrener, Valentin Thomas, and Timur Garipov for useful suggestions on the early draft of the paper; Compute Canada for providing computational resources.
This work was partially supported by Facebook CIFAR AI Chair and IVADO.

We acknowledge the Python community \citep{van1995python,oliphant2007python}
for developing the core set of tools that enabled this work, including
JAX \citep{jax2018github},
Jupyter \citep{kluyver2016jupyter},
Matplotlib \citep{hunter2007matplotlib},
numpy \citep{oliphant2006guide,van2011numpy},
pandas \citep{mckinney2012python}, and
SciPy~\citep{jones2014scipy}.

\end{ack}

\bibliographystyle{plainnat}
\bibliography{bibliography}

\begin{thebibliography}{84}
\providecommand{\natexlab}[1]{#1}
\providecommand{\url}[1]{\texttt{#1}}
\expandafter\ifx\csname urlstyle\endcsname\relax
  \providecommand{\doi}[1]{doi: #1}\else
  \providecommand{\doi}{doi: \begingroup \urlstyle{rm}\Url}\fi

\bibitem[Abachi et~al.(2020)Abachi, Ghavamzadeh, and
  Farahmand]{abachi2020policy}
Romina Abachi, Mohammad Ghavamzadeh, and Amir-massoud Farahmand.
\newblock Policy-aware model learning for policy gradient methods.
\newblock \emph{arXiv preprint arXiv:2003.00030}, 2020.

\bibitem[Abbad and Filar(1992)]{Abbad1992}
M.~Abbad and J.A. Filar.
\newblock Perturbation and stability theory for markov control problems.
\newblock \emph{{IEEE} Transactions on Automatic Control}, 37\penalty0
  (9):\penalty0 1415--1420, 1992.
\newblock \doi{10.1109/9.159584}.

\bibitem[Amos and Yarats(2020)]{amos2020differentiable}
Brandon Amos and Denis Yarats.
\newblock The differentiable cross-entropy method.
\newblock In \emph{International Conference on Machine Learning}, pages
  291--302. PMLR, 2020.

\bibitem[Amos et~al.(2018)Amos, Jimenez, Sacks, Boots, and
  Kolter]{amos2018differentiable}
Brandon Amos, Ivan Jimenez, Jacob Sacks, Byron Boots, and J~Zico Kolter.
\newblock Differentiable mpc for end-to-end planning and control.
\newblock In \emph{Advances in Neural Information Processing Systems}, pages
  8289--8300, 2018.

\bibitem[Ayoub et~al.(2020)Ayoub, Jia, Szepesvari, Wang, and
  Yang]{ayoub2020model}
Alex Ayoub, Zeyu Jia, Csaba Szepesvari, Mengdi Wang, and Lin~F Yang.
\newblock Model-based reinforcement learning with value-targeted regression.
\newblock \emph{arXiv preprint arXiv:2006.01107}, 2020.

\bibitem[Babuschkin et~al.(2020)Babuschkin, Baumli, Bell, Bhupatiraju, Bruce,
  Buchlovsky, Budden, Cai, Clark, Danihelka, Fantacci, Godwin, Jones, Hennigan,
  Hessel, Kapturowski, Keck, Kemaev, King, Martens, Mikulik, Norman, Quan,
  Papamakarios, Ring, Ruiz, Sanchez, Schneider, Sezener, Spencer, Srinivasan,
  Stokowiec, and Viola]{deepmind2020jax}
Igor Babuschkin, Kate Baumli, Alison Bell, Surya Bhupatiraju, Jake Bruce, Peter
  Buchlovsky, David Budden, Trevor Cai, Aidan Clark, Ivo Danihelka, Claudio
  Fantacci, Jonathan Godwin, Chris Jones, Tom Hennigan, Matteo Hessel, Steven
  Kapturowski, Thomas Keck, Iurii Kemaev, Michael King, Lena Martens, Vladimir
  Mikulik, Tamara Norman, John Quan, George Papamakarios, Roman Ring, Francisco
  Ruiz, Alvaro Sanchez, Rosalia Schneider, Eren Sezener, Stephen Spencer,
  Srivatsan Srinivasan, Wojciech Stokowiec, and Fabio Viola.
\newblock The {D}eep{M}ind {JAX} {E}cosystem, 2020.
\newblock URL \url{http://github.com/deepmind}.

\bibitem[Bacon et~al.(2019)Bacon, Sch{\"a}fer, Gehring, Anandkumar, and
  Brunskill]{baconlagrangian}
Pierre-Luc Bacon, Florian Sch{\"a}fer, Clement Gehring, Animashree Anandkumar,
  and Emma Brunskill.
\newblock A lagrangian method for inverse problems in reinforcement learning.
\newblock In \emph{Optimization in RL workshop at NeurIPS 2019}, 2019.

\bibitem[Bai et~al.(2019)Bai, Kolter, and Koltun]{bai2019deep}
Shaojie Bai, J~Zico Kolter, and Vladlen Koltun.
\newblock Deep equilibrium models.
\newblock In \emph{Advances in Neural Information Processing Systems}, pages
  690--701, 2019.

\bibitem[Barto et~al.(1983)Barto, Sutton, and Anderson]{barto1983neuronlike}
Andrew~G Barto, Richard~S Sutton, and Charles~W Anderson.
\newblock Neuronlike adaptive elements that can solve difficult learning
  control problems.
\newblock \emph{IEEE transactions on systems, man, and cybernetics}, \penalty0
  (5):\penalty0 834--846, 1983.

\bibitem[Beattie et~al.(2016)Beattie, Leibo, Teplyashin, Ward, Wainwright,
  K{\"u}ttler, Lefrancq, Green, Vald{\'e}s, Sadik, et~al.]{beattie2016deepmind}
Charles Beattie, Joel~Z Leibo, Denis Teplyashin, Tom Ward, Marcus Wainwright,
  Heinrich K{\"u}ttler, Andrew Lefrancq, Simon Green, V{\'\i}ctor Vald{\'e}s,
  Amir Sadik, et~al.
\newblock Deepmind lab.
\newblock \emph{arXiv preprint arXiv:1612.03801}, 2016.

\bibitem[Bellemare et~al.(2013)Bellemare, Naddaf, Veness, and
  Bowling]{bellemare2013arcade}
Marc~G Bellemare, Yavar Naddaf, Joel Veness, and Michael Bowling.
\newblock The arcade learning environment: An evaluation platform for general
  agents.
\newblock \emph{Journal of Artificial Intelligence Research}, 47:\penalty0
  253--279, 2013.

\bibitem[Boots et~al.(2011)Boots, Siddiqi, and Gordon]{Boots2011}
Byron Boots, Sajid~M. Siddiqi, and Geoffrey~J. Gordon.
\newblock Closing the learning-planning loop with predictive state
  representations.
\newblock \emph{Int. J. Robotics Res.}, 30\penalty0 (7):\penalty0 954--966,
  2011.
\newblock \doi{10.1177/0278364911404092}.

\bibitem[Borkar and Varaiya(1979)]{Borkar1979}
V.~Borkar and P.~Varaiya.
\newblock Adaptive control of markov chains, i: Finite parameter set.
\newblock \emph{{IEEE} Transactions on Automatic Control}, 24\penalty0
  (6):\penalty0 953--957, December 1979.

\bibitem[Boyd et~al.(2004)Boyd, Boyd, and Vandenberghe]{boyd2004convex}
Stephen Boyd, Stephen~P Boyd, and Lieven Vandenberghe.
\newblock \emph{Convex optimization}.
\newblock Cambridge university press, 2004.

\bibitem[Bradbury et~al.(2018)Bradbury, Frostig, Hawkins, Johnson, Leary,
  Maclaurin, Necula, Paszke, Vander{P}las, Wanderman-{M}ilne, and
  Zhang]{jax2018github}
James Bradbury, Roy Frostig, Peter Hawkins, Matthew~James Johnson, Chris Leary,
  Dougal Maclaurin, George Necula, Adam Paszke, Jake Vander{P}las, Skye
  Wanderman-{M}ilne, and Qiao Zhang.
\newblock {JAX}: composable transformations of {P}ython+{N}um{P}y programs,
  2018.
\newblock URL \url{http://github.com/google/jax}.

\bibitem[Christianson(1994)]{Christianson1994}
Bruce Christianson.
\newblock Reverse accumulation and attractive fixed points.
\newblock \emph{Optimization Methods and Software}, 3\penalty0 (4):\penalty0
  311--326, January 1994.
\newblock \doi{10.1080/10556789408805572}.

\bibitem[Chua et~al.(2018)Chua, Calandra, McAllister, and Levine]{chua2018deep}
Kurtland Chua, Roberto Calandra, Rowan McAllister, and Sergey Levine.
\newblock Deep reinforcement learning in a handful of trials using
  probabilistic dynamics models.
\newblock In \emph{Advances in Neural Information Processing Systems}, pages
  4754--4765, 2018.

\bibitem[Dadashi et~al.(2019)Dadashi, Taiga, Le~Roux, Schuurmans, and
  Bellemare]{dadashi2019value}
Robert Dadashi, Adrien~Ali Taiga, Nicolas Le~Roux, Dale Schuurmans, and Marc~G
  Bellemare.
\newblock The value function polytope in reinforcement learning.
\newblock In \emph{International Conference on Machine Learning}, pages
  1486--1495. PMLR, 2019.

\bibitem[Dauphin et~al.(2014)Dauphin, Pascanu, Gulcehre, Cho, Ganguli, and
  Bengio]{dauphin2014identifying}
Yann Dauphin, Razvan Pascanu, Caglar Gulcehre, Kyunghyun Cho, Surya Ganguli,
  and Yoshua Bengio.
\newblock Identifying and attacking the saddle point problem in
  high-dimensional non-convex optimization.
\newblock \emph{arXiv preprint arXiv:1406.2572}, 2014.

\bibitem[Denardo(1967)]{Denardo1967}
Eric~V. Denardo.
\newblock Contraction mappings in the theory underlying dynamic programming.
\newblock \emph{{SIAM} Review}, 9\penalty0 (2):\penalty0 165--177, April 1967.
\newblock \doi{10.1137/1009030}.

\bibitem[Dorato et~al.(1994)Dorato, Cerone, and Abdallah]{dorato1994linear}
Peter Dorato, Vito Cerone, and Chaouki Abdallah.
\newblock \emph{Linear-quadratic control: an introduction}.
\newblock Simon \& Schuster, Inc., 1994.

\bibitem[D'Oro et~al.(2020)D'Oro, Metelli, Tirinzoni, Papini, and
  Restelli]{d2020gradient}
Pierluca D'Oro, Alberto~Maria Metelli, Andrea Tirinzoni, Matteo Papini, and
  Marcello Restelli.
\newblock Gradient-aware model-based policy search.
\newblock In \emph{Proceedings of the AAAI Conference on Artificial
  Intelligence}, volume~34, pages 3801--3808, 2020.

\bibitem[Farahmand et~al.(2017)Farahmand, Barreto, and
  Nikovski]{farahmand2017value}
Amir-massoud Farahmand, Andre Barreto, and Daniel Nikovski.
\newblock Value-aware loss function for model-based reinforcement learning.
\newblock In \emph{Artificial Intelligence and Statistics}, pages 1486--1494,
  2017.

\bibitem[Finn et~al.(2017)Finn, Abbeel, and Levine]{finn2017model}
Chelsea Finn, Pieter Abbeel, and Sergey Levine.
\newblock Model-agnostic meta-learning for fast adaptation of deep networks.
\newblock In \emph{International Conference on Machine Learning}, pages
  1126--1135. PMLR, 2017.

\bibitem[Fujimoto et~al.(2018)Fujimoto, Hoof, and
  Meger]{fujimoto2018addressing}
Scott Fujimoto, Herke Hoof, and David Meger.
\newblock Addressing function approximation error in actor-critic methods.
\newblock In \emph{International Conference on Machine Learning}, pages
  1587--1596. PMLR, 2018.

\bibitem[Gehring et~al.(2019)Gehring, Bacon, and Schaefer]{gehring2019fax}
Clement Gehring, Pierre-Luc Bacon, and Florian Schaefer.
\newblock {FAX: differentiating fixed point problems in JAX}, 2019.
\newblock Available at: https://github.com/gehring/fax.

\bibitem[Georgin(1978)]{Georgin1978}
J.~P. Georgin.
\newblock Estimation et controle des chaines de markov sur des espaces
  arbitraires.
\newblock In \emph{Lecture Notes in Mathematics}, pages 71--113. Springer
  Berlin Heidelberg, 1978.
\newblock \doi{10.1007/bfb0063261}.

\bibitem[Grefenstette et~al.(1990)Grefenstette, Ramsey, and
  Schultz]{Grefenstette1990}
John~J. Grefenstette, Connie~Loggia Ramsey, and Alan~C. Schultz.
\newblock Learning sequential decision rules using simulation models and
  competition.
\newblock \emph{Machine Learning}, 5\penalty0 (4):\penalty0 355--381, October
  1990.
\newblock \doi{10.1007/bf00116876}.

\bibitem[Griewank and Walther(2008)]{Griewank2008}
Andreas Griewank and Andrea Walther.
\newblock \emph{Evaluating Derivatives}.
\newblock Society for Industrial and Applied Mathematics, January 2008.

\bibitem[Grimm et~al.(2020)Grimm, Barreto, Singh, and Silver]{grimm2020value}
Christopher Grimm, Andr{\'e} Barreto, Satinder Singh, and David Silver.
\newblock The value equivalence principle for model-based reinforcement
  learning.
\newblock \emph{Advances in Neural Information Processing Systems}, 33, 2020.

\bibitem[Haarnoja et~al.(2018{\natexlab{a}})Haarnoja, Zhou, Abbeel, and
  Levine]{haarnoja2018soft1}
Tuomas Haarnoja, Aurick Zhou, Pieter Abbeel, and Sergey Levine.
\newblock Soft actor-critic: Off-policy maximum entropy deep reinforcement
  learning with a stochastic actor.
\newblock In \emph{International Conference on Machine Learning}, pages
  1861--1870. PMLR, 2018{\natexlab{a}}.

\bibitem[Haarnoja et~al.(2018{\natexlab{b}})Haarnoja, Zhou, Hartikainen,
  Tucker, Ha, Tan, Kumar, Zhu, Gupta, Abbeel, et~al.]{haarnoja2018soft2}
Tuomas Haarnoja, Aurick Zhou, Kristian Hartikainen, George Tucker, Sehoon Ha,
  Jie Tan, Vikash Kumar, Henry Zhu, Abhishek Gupta, Pieter Abbeel, et~al.
\newblock Soft actor-critic algorithms and applications.
\newblock \emph{arXiv preprint arXiv:1812.05905}, 2018{\natexlab{b}}.

\bibitem[Hafner et~al.(2019)Hafner, Lillicrap, Fischer, Villegas, Ha, Lee, and
  Davidson]{hafner2019learning}
Danijar Hafner, Timothy Lillicrap, Ian Fischer, Ruben Villegas, David Ha,
  Honglak Lee, and James Davidson.
\newblock Learning latent dynamics for planning from pixels.
\newblock In \emph{International Conference on Machine Learning}, pages
  2555--2565. PMLR, 2019.

\bibitem[Hafner et~al.(2020)Hafner, Lillicrap, Norouzi, and
  Ba]{hafner2020mastering}
Danijar Hafner, Timothy Lillicrap, Mohammad Norouzi, and Jimmy Ba.
\newblock Mastering atari with discrete world models.
\newblock \emph{arXiv preprint arXiv:2010.02193}, 2020.

\bibitem[Hasselt(2010)]{hasselt2010double}
Hado Hasselt.
\newblock Double q-learning.
\newblock \emph{Advances in neural information processing systems},
  23:\penalty0 2613--2621, 2010.

\bibitem[Heek et~al.(2020)Heek, Levskaya, Oliver, Ritter, Rondepierre, Steiner,
  and van {Z}ee]{flax2020github}
Jonathan Heek, Anselm Levskaya, Avital Oliver, Marvin Ritter, Bertrand
  Rondepierre, Andreas Steiner, and Marc van {Z}ee.
\newblock {F}lax: A neural network library and ecosystem for {JAX}, 2020.
\newblock URL \url{http://github.com/google/flax}.

\bibitem[Hern{\'{a}}ndez-Lerma and Marcus(1985)]{HernndezLerma1985}
O.~Hern{\'{a}}ndez-Lerma and S.~I. Marcus.
\newblock Adaptive control of discounted markov decision chains.
\newblock \emph{Journal of Optimization Theory and Applications}, 46\penalty0
  (2):\penalty0 227--235, June 1985.
\newblock \doi{10.1007/bf00938426}.

\bibitem[Hunter(2007)]{hunter2007matplotlib}
John~D Hunter.
\newblock Matplotlib: A 2d graphics environment.
\newblock \emph{IEEE Annals of the History of Computing}, 9\penalty0
  (03):\penalty0 90--95, 2007.

\bibitem[Iyengar(2005)]{Iyengar2005}
Garud~N. Iyengar.
\newblock Robust dynamic programming.
\newblock \emph{Mathematics of Operations Research}, 30\penalty0 (2):\penalty0
  257--280, May 2005.
\newblock \doi{10.1287/moor.1040.0129}.

\bibitem[Janner et~al.(2019)Janner, Fu, Zhang, and Levine]{janner2019trust}
Michael Janner, Justin Fu, Marvin Zhang, and Sergey Levine.
\newblock When to trust your model: Model-based policy optimization.
\newblock In \emph{Advances in Neural Information Processing Systems}, pages
  12519--12530, 2019.

\bibitem[Jiang(2018)]{jiang2018notes}
Nan Jiang.
\newblock Lecture notes on statistical reinforcement learning.
\newblock 2018.

\bibitem[Jones et~al.(2014)Jones, Oliphant, and Peterson]{jones2014scipy}
Eric Jones, Travis Oliphant, and Pearu Peterson.
\newblock \emph{SciPy: Open source scientific tools for Python}.
\newblock 2014.

\bibitem[Joseph et~al.(2013)Joseph, Geramifard, Roberts, How, and
  Roy]{joseph2013reinforcement}
Joshua Joseph, Alborz Geramifard, John~W Roberts, Jonathan~P How, and Nicholas
  Roy.
\newblock Reinforcement learning with misspecified model classes.
\newblock In \emph{2013 IEEE International Conference on Robotics and
  Automation}, pages 939--946. IEEE, 2013.

\bibitem[Kaiser et~al.(2019)Kaiser, Babaeizadeh, Milos, Osinski, Campbell,
  Czechowski, Erhan, Finn, Kozakowski, Levine, et~al.]{kaiser2019model}
Lukasz Kaiser, Mohammad Babaeizadeh, Piotr Milos, Blazej Osinski, Roy~H
  Campbell, Konrad Czechowski, Dumitru Erhan, Chelsea Finn, Piotr Kozakowski,
  Sergey Levine, et~al.
\newblock Model-based reinforcement learning for atari.
\newblock \emph{arXiv preprint arXiv:1903.00374}, 2019.

\bibitem[Kalashnikov et~al.(2021)Kalashnikov, Varley, Chebotar, Swanson,
  Jonschkowski, Finn, Levine, and Hausman]{kalashnikov2021mt}
Dmitry Kalashnikov, Jacob Varley, Yevgen Chebotar, Benjamin Swanson, Rico
  Jonschkowski, Chelsea Finn, Sergey Levine, and Karol Hausman.
\newblock Mt-opt: Continuous multi-task robotic reinforcement learning at
  scale.
\newblock \emph{arXiv preprint arXiv:2104.08212}, 2021.

\bibitem[Kearns and Singh(2002)]{kearns2002near}
Michael Kearns and Satinder Singh.
\newblock Near-optimal reinforcement learning in polynomial time.
\newblock \emph{Machine learning}, 49\penalty0 (2):\penalty0 209--232, 2002.

\bibitem[Kluyver et~al.(2016)Kluyver, Ragan-Kelley, P{\'e}rez, Granger,
  Bussonnier, Frederic, Kelley, Hamrick, Grout, Corlay,
  et~al.]{kluyver2016jupyter}
Thomas Kluyver, Benjamin Ragan-Kelley, Fernando P{\'e}rez, Brian~E Granger,
  Matthias Bussonnier, Jonathan Frederic, Kyle Kelley, Jessica~B Hamrick, Jason
  Grout, Sylvain Corlay, et~al.
\newblock \emph{Jupyter Notebooks-a publishing format for reproducible
  computational workflows.}, volume 2016.
\newblock 2016.

\bibitem[Krantz and Parks(2012)]{krantz2012implicit}
Steven~G Krantz and Harold~R Parks.
\newblock \emph{The implicit function theorem: history, theory, and
  applications}.
\newblock Springer Science \& Business Media, 2012.

\bibitem[Kurano(1972)]{kurano1972discrete}
Masami Kurano.
\newblock Discrete-time markovian decision processes with an unknown
  parameter-average return criterion.
\newblock \emph{Journal of the Operations Research Society of Japan},
  15\penalty0 (2):\penalty0 67--76, 1972.

\bibitem[Lambert et~al.(2020)Lambert, Amos, Yadan, and
  Calandra]{lambert2020objective}
Nathan Lambert, Brandon Amos, Omry Yadan, and Roberto Calandra.
\newblock Objective mismatch in model-based reinforcement learning.
\newblock \emph{arXiv preprint arXiv:2002.04523}, 2020.

\bibitem[Levine(2018)]{levine2018reinforcement}
Sergey Levine.
\newblock Reinforcement learning and control as probabilistic inference:
  Tutorial and review.
\newblock \emph{arXiv preprint arXiv:1805.00909}, 2018.

\bibitem[Li et~al.(2006)Li, Walsh, and Littman]{li2006towards}
Lihong Li, Thomas~J Walsh, and Michael~L Littman.
\newblock Towards a unified theory of state abstraction for mdps.
\newblock \emph{ISAIM}, 4:\penalty0 5, 2006.

\bibitem[Lin(1992)]{Lin1992}
Long-Ji Lin.
\newblock Self-improving reactive agents based on reinforcement learning,
  planning and teaching.
\newblock \emph{Machine Learning}, 8\penalty0 (3-4):\penalty0 293--321, May
  1992.
\newblock \doi{10.1007/bf00992699}.

\bibitem[Lorraine et~al.(2020)Lorraine, Vicol, and
  Duvenaud]{lorraine2020optimizing}
Jonathan Lorraine, Paul Vicol, and David Duvenaud.
\newblock Optimizing millions of hyperparameters by implicit differentiation.
\newblock In \emph{International Conference on Artificial Intelligence and
  Statistics}, pages 1540--1552. PMLR, 2020.

\bibitem[Ludwig and Walters(1982)]{Ludwig1982}
D.~Ludwig and C.J. Walters.
\newblock Optimal harvesting with imprecise parameter estimates.
\newblock \emph{Ecological Modelling}, 14\penalty0 (3-4):\penalty0 273--292,
  January 1982.
\newblock \doi{10.1016/0304-3800(82)90023-0}.

\bibitem[Mandl(1974)]{Mandl1974}
P.~Mandl.
\newblock Estimation and control in markov chains.
\newblock \emph{Advances in Applied Probability}, 6\penalty0 (1):\penalty0
  40--60, March 1974.
\newblock \doi{10.2307/1426206}.

\bibitem[Manfred(1987)]{Manfred1987}
Sch\"{a}l Manfred.
\newblock Estimation and control in discounted stochastic dynamic programming.
\newblock \emph{Stochastics}, 20\penalty0 (1):\penalty0 51--71, January 1987.
\newblock \doi{10.1080/17442508708833435}.

\bibitem[McKinney(2012)]{mckinney2012python}
Wes McKinney.
\newblock \emph{Python for data analysis: Data wrangling with Pandas, NumPy,
  and IPython}.
\newblock " O'Reilly Media, Inc.", 2012.

\bibitem[Mnih et~al.(2015)Mnih, Kavukcuoglu, Silver, Rusu, Veness, Bellemare,
  Graves, Riedmiller, Fidjeland, Ostrovski, et~al.]{mnih2015human}
Volodymyr Mnih, Koray Kavukcuoglu, David Silver, Andrei~A Rusu, Joel Veness,
  Marc~G Bellemare, Alex Graves, Martin Riedmiller, Andreas~K Fidjeland, Georg
  Ostrovski, et~al.
\newblock Human-level control through deep reinforcement learning.
\newblock \emph{nature}, 518\penalty0 (7540):\penalty0 529--533, 2015.

\bibitem[Nair and Hinton(2010)]{nair2010rectified}
Vinod Nair and Geoffrey~E Hinton.
\newblock Rectified linear units improve restricted boltzmann machines.
\newblock In \emph{Icml}, 2010.

\bibitem[Nilim and Ghaoui(2005)]{Nilim2005}
Arnab Nilim and Laurent~El Ghaoui.
\newblock Robust control of markov decision processes with uncertain transition
  matrices.
\newblock \emph{Operations Research}, 53\penalty0 (5):\penalty0 780--798,
  October 2005.
\newblock \doi{10.1287/opre.1050.0216}.

\bibitem[Oliphant(2006)]{oliphant2006guide}
Travis~E Oliphant.
\newblock \emph{A guide to NumPy}, volume~1.
\newblock Trelgol Publishing USA, 2006.

\bibitem[Oliphant(2007)]{oliphant2007python}
Travis~E Oliphant.
\newblock Python for scientific computing.
\newblock \emph{Computing in Science \& Engineering}, 9\penalty0 (3):\penalty0
  10--20, 2007.

\bibitem[Rajeswaran et~al.(2019)Rajeswaran, Finn, Kakade, and
  Levine]{rajeswaran2019meta}
Aravind Rajeswaran, Chelsea Finn, Sham~M Kakade, and Sergey Levine.
\newblock Meta-learning with implicit gradients.
\newblock In \emph{Advances in Neural Information Processing Systems}, pages
  113--124, 2019.

\bibitem[Rajeswaran et~al.(2020)Rajeswaran, Mordatch, and
  Kumar]{rajeswaran2020game}
Aravind Rajeswaran, Igor Mordatch, and Vikash Kumar.
\newblock A game theoretic framework for model based reinforcement learning.
\newblock \emph{arXiv preprint arXiv:2004.07804}, 2020.

\bibitem[Rubinstein(1997)]{rubinstein1997optimization}
Reuven~Y Rubinstein.
\newblock Optimization of computer simulation models with rare events.
\newblock \emph{European Journal of Operational Research}, 99\penalty0
  (1):\penalty0 89--112, 1997.

\bibitem[Rust(1988)]{rust1988maximum}
John Rust.
\newblock Maximum likelihood estimation of discrete control processes.
\newblock \emph{SIAM journal on control and optimization}, 26\penalty0
  (5):\penalty0 1006--1024, 1988.

\bibitem[Sagun et~al.(2017)Sagun, Evci, Guney, Dauphin, and
  Bottou]{sagun2017empirical}
Levent Sagun, Utku Evci, V~Ugur Guney, Yann Dauphin, and Leon Bottou.
\newblock Empirical analysis of the hessian of over-parametrized neural
  networks.
\newblock \emph{arXiv preprint arXiv:1706.04454}, 2017.

\bibitem[Sato et~al.(1988)Sato, Abe, and Takeda]{Sato1988}
M.~Sato, K.~Abe, and H.~Takeda.
\newblock Learning control of finite markov chains with an explicit trade-off
  between estimation and control.
\newblock \emph{{IEEE} Transactions on Systems, Man, and Cybernetics},
  18\penalty0 (5):\penalty0 677--684, 1988.
\newblock \doi{10.1109/21.21595}.

\bibitem[Schrittwieser et~al.(2019)Schrittwieser, Antonoglou, Hubert, Simonyan,
  Sifre, Schmitt, Guez, Lockhart, Hassabis, Graepel,
  et~al.]{schrittwieser2019mastering}
Julian Schrittwieser, Ioannis Antonoglou, Thomas Hubert, Karen Simonyan,
  Laurent Sifre, Simon Schmitt, Arthur Guez, Edward Lockhart, Demis Hassabis,
  Thore Graepel, et~al.
\newblock Mastering atari, go, chess and shogi by planning with a learned
  model.
\newblock \emph{arXiv preprint arXiv:1911.08265}, 2019.

\bibitem[Skelton(1989)]{skelton1989model}
RE~Skelton.
\newblock Model error concepts in control design.
\newblock \emph{International Journal of Control}, 49\penalty0 (5):\penalty0
  1725--1753, 1989.

\bibitem[Sorg et~al.(2010)Sorg, Lewis, and Singh]{Sorg2010}
Jonathan Sorg, Richard~L Lewis, and Satinder Singh.
\newblock Reward design via online gradient ascent.
\newblock In J.~Lafferty, C.~Williams, J.~Shawe-Taylor, R.~Zemel, and
  A.~Culotta, editors, \emph{Advances in Neural Information Processing
  Systems}, volume~23, pages 2190--2198. Curran Associates, Inc., 2010.

\bibitem[Sutton(1991)]{sutton1991dyna}
Richard~S Sutton.
\newblock Dyna, an integrated architecture for learning, planning, and
  reacting.
\newblock \emph{ACM Sigart Bulletin}, 2\penalty0 (4):\penalty0 160--163, 1991.

\bibitem[Sutton and Barto(2018)]{sutton2018reinforcement}
Richard~S Sutton and Andrew~G Barto.
\newblock \emph{Reinforcement learning: An introduction}.
\newblock MIT press, 2018.

\bibitem[Sutton et~al.(1999)Sutton, McAllester, Singh, and
  Mansour]{sutton1999policy}
Richard~S Sutton, David McAllester, Satinder Singh, and Yishay Mansour.
\newblock Policy gradient methods for reinforcement learning with function
  approximation.
\newblock \emph{Advances in neural information processing systems},
  12:\penalty0 1057--1063, 1999.

\bibitem[Tamar et~al.(2016)Tamar, Wu, Thomas, Levine, and
  Abbeel]{tamar2016value}
Aviv Tamar, Yi~Wu, Garrett Thomas, Sergey Levine, and Pieter Abbeel.
\newblock Value iteration networks.
\newblock \emph{arXiv preprint arXiv:1602.02867}, 2016.

\bibitem[Theil(1957)]{Theil1957}
H.~Theil.
\newblock A note on certainty equivalence in dynamic planning.
\newblock \emph{Econometrica}, 25\penalty0 (2):\penalty0 346, April 1957.

\bibitem[Todorov et~al.(2012)Todorov, Erez, and Tassa]{todorov2012mujoco}
Emanuel Todorov, Tom Erez, and Yuval Tassa.
\newblock Mujoco: A physics engine for model-based control.
\newblock In \emph{2012 IEEE/RSJ International Conference on Intelligent Robots
  and Systems}, pages 5026--5033. IEEE, 2012.

\bibitem[Van Der~Walt et~al.(2011)Van Der~Walt, Colbert, and
  Varoquaux]{van2011numpy}
Stefan Van Der~Walt, S~Chris Colbert, and Gael Varoquaux.
\newblock The numpy array: a structure for efficient numerical computation.
\newblock \emph{Computing in science \& engineering}, 13\penalty0 (2):\penalty0
  22--30, 2011.

\bibitem[Van~Rossum and Drake~Jr(1995)]{van1995python}
Guido Van~Rossum and Fred~L Drake~Jr.
\newblock \emph{Python tutorial}, volume 620.
\newblock Centrum voor Wiskunde en Informatica Amsterdam, 1995.

\bibitem[Watkins and Dayan(1992)]{watkins1992q}
Christopher~JCH Watkins and Peter Dayan.
\newblock Q-learning.
\newblock \emph{Machine learning}, 8\penalty0 (3-4):\penalty0 279--292, 1992.

\bibitem[Xu and Mannor(2010)]{Xu2010}
Huan Xu and Shie Mannor.
\newblock Distributionally robust markov decision processes.
\newblock In J.~Lafferty, C.~Williams, J.~Shawe-Taylor, R.~Zemel, and
  A.~Culotta, editors, \emph{Advances in Neural Information Processing
  Systems}, volume~23, pages 2505--2513. Curran Associates, Inc., 2010.

\bibitem[Yu et~al.(2020)Yu, Thomas, Yu, Ermon, Zou, Levine, Finn, and
  Ma]{yu2020mopo}
Tianhe Yu, Garrett Thomas, Lantao Yu, Stefano Ermon, James Zou, Sergey Levine,
  Chelsea Finn, and Tengyu Ma.
\newblock Mopo: Model-based offline policy optimization.
\newblock In \emph{Advances in Neural Information Processing Systems 33
  pre-proceedings}, 2020.

\bibitem[Ziebart et~al.(2008)Ziebart, Maas, Bagnell, and
  Dey]{ziebart2008maximum}
Brian~D Ziebart, Andrew~L Maas, J~Andrew Bagnell, and Anind~K Dey.
\newblock Maximum entropy inverse reinforcement learning.
\newblock In \emph{Aaai}, volume~8, pages 1433--1438. Chicago, IL, USA, 2008.

\end{thebibliography}

\newpage
\appendix

{\Large \bf Supplementary Material}

\section{Implicit Differentiation in JAX}
\label{sec:jax_ift}

We provide an implementation of implicit differentiation in JAX \cite{deepmind2020jax, flax2020github} by adapting the code from the library for finding fixed points \cite{gehring2019fax}.
The implementation requires \emph{a solver} that takes parameters $\theta$ and an initial $w_0$ as input and outputs $\varphi(\theta) = w^*$ such that $f(\theta, w^*) = 0$. 
Then, we define a custom vector-Jacobian product that allows using $\varphi(\theta)$ as a block in a differentiable computational graph. 
We highlight the importance of having an implementation \emph{without} explicitly forming the matrices in \eqref{eq:ift} which is crucial for large-scale applications. 
The implementation allows using both the version with the inverse Jacobian term as well as with the identity approximation as discussed in Section~\ref{sec:omd_cont}.

\label{sec:code}
\lstset{
    language=Python,
    basicstyle=\footnotesize\ttfamily,
    keywordstyle=\color{blue}\ttfamily,
    stringstyle=\color{red}\ttfamily,
    commentstyle=\color{green}\ttfamily,
    morecomment=[l][\color{magenta}]{\#}
}

\begin{lstlisting}
import jax.numpy as jnp
from jax import custom_vjp, vjp
from functools import partial
from jax.scipy.sparse.linalg import cg

@partial(custom_vjp, nondiff_argnums=(0, 3))
def root_solve(f, w0, p, solver):
  return solver(f, w0, p)

def fwd(f, w0, p, solver):
  sol = root_solve(f, w0, p, solver)
  return sol, (sol, p)

def rev(f, solver, res, g):
  sol, p = res
  _, dp_vjp = vjp(lambda y: f(y, sol), p)
  if USE_IDENTITY_INVERSE:
    vdp = dp_vjp(-g)[0]
  else:
    _, dsol_vjp = vjp(lambda w: f(p, w), sol)
    vdsoli = cg(lambda v: dsol_vjp(v)[0], g)
    vdp = dp_vjp(-vdsoli[0])[0]
  return jnp.zeros_like(sol), vdp

root_solve.defvjp(fwd, rev)
sol = root_solve(f, w0, p, solver)
# solver returns sol: f(p, sol) = 0
# sol is differentiable w.r.t. p
\end{lstlisting}

The implicit function theorem~(IFT), which we prove in the next appendix, allows differentiating outputs of black-box functions $\varphi(\theta)$ that do not have a closed (differentiable) form.
For example, in Section~\ref{sec:misspec_tab}, function $\varphi$ takes as input the parameters of the model and applies fixed-point iteration until convergence to find the optimal value function for the given parameters.
This contrasts implicit differentiation to other meta-learning algorithms like MAML~\citep{finn2017model} that differentiate \emph{through} the iterations of the inner loop procedure.

\section{Proof of IFT}
\label{sec:proof}

The implicit function theorem is a well known result discussed in, for example, \citep{krantz2012implicit}.
{\renewcommand\footnote[1]{}\ift*}
\begin{proof}
    By the assumption, we have
    \[
        f(\theta, \varphi(\theta)) = \zero \quad \forall \theta \in U.
    \]
    Taking the total derivative of $f$ with respect to $\theta$, we have
    \[
        \frac{\partial f(\theta, w^*)}{\partial \theta} + \frac{\partial f(\theta, w^*)}{\partial w} \frac{\partial\varphi(\theta)}{\partial\theta} \Big\rvert_{w^* = \varphi(\theta)} = \zero.
    \]
    Rearranging the terms and using the invertibility of the Jacobian, we get
    \[
        \frac{\partial\varphi(\theta)}{\partial\theta} = -\left(\frac{\partial f(\theta, w^*)}{\partial w}\right)^{-1} \cdot \frac{\partial f(\theta, w^*)}{\partial \theta} \Big\rvert_{w^* = \varphi(\theta)}.
    \]
\end{proof}

\begin{figure}[t]
\begin{center}
\centerline{\includegraphics[width=0.6\columnwidth]{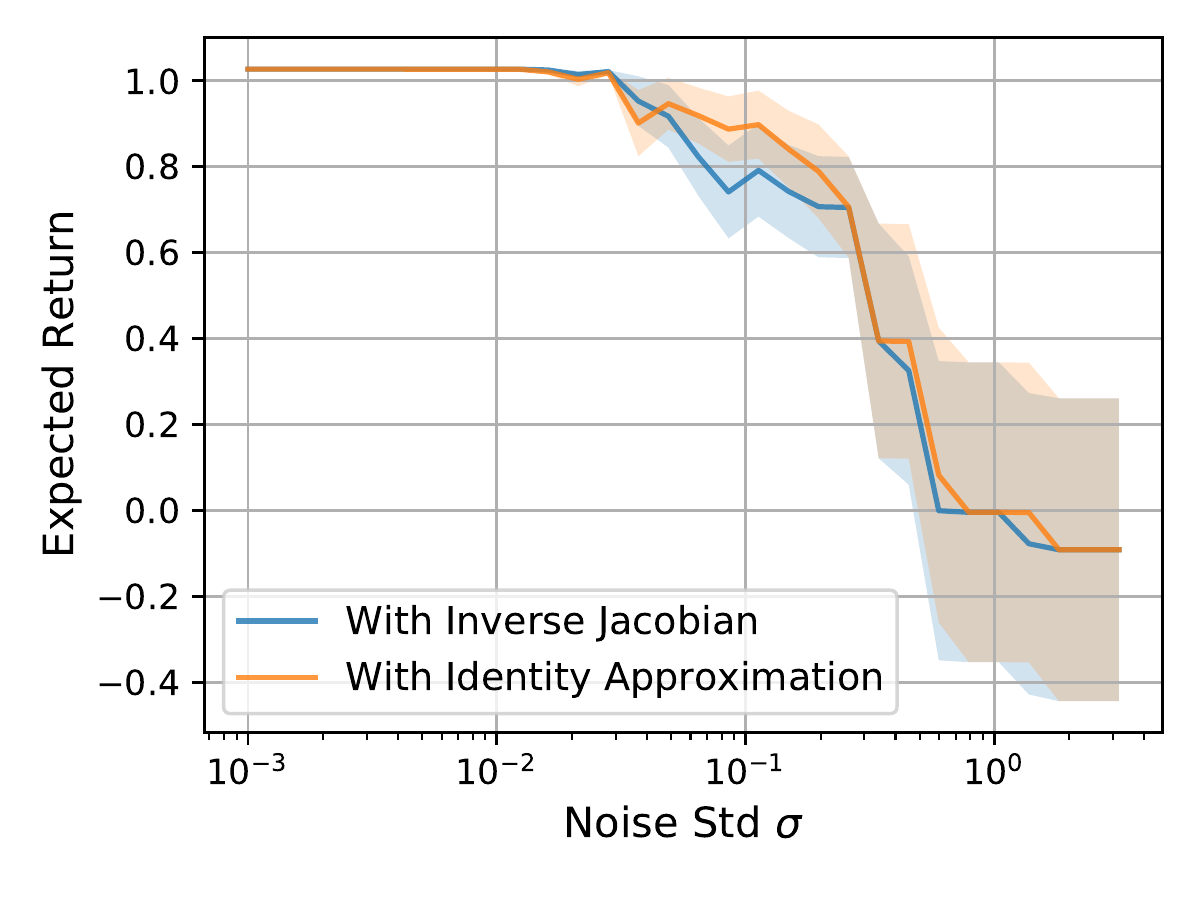}}
\vskip -0.1in
\caption{The expected returns as a function of the inner loop solutions noise magnitude $\sigma$. OMD with the true and the approximate $\frac{\partial \varphi(\theta)}{\partial \theta}$ has the same resulting returns as $\sigma$ increases. The shaded region is the standard error over 10 runs.}
\label{fig:ift_approx_tab}
\end{center}
\vskip -0.2in
\end{figure}

\section{Sensitivity to IFT Approximations}
\label{sec:ift_sens}

The IFT provides a way to calculate the derivatives of a black-box function $\varphi$. 
The expression~\eqref{eq:ift} is valid under the assumptions that
\begin{enumerate}
    \item the inner loop solution $w^*$ satisfies the equation $f(\theta, w^*) = 0$ exactly;
    \item the Jacobian term $\frac{\partial f(\theta, w^*)}{\partial w}$ is inverted accurately.
\end{enumerate}

Ensuring both of the conditions can be challenging for large-scale applications.
This appendix analyzes the sensitivity to the conditions in a controlled manner for the 2-state MDP from Figure~\ref{fig:equiv_mdps_and_bounds}, while Appendix~\ref{sec:ablation} studies the sensitivity for the function approximation case.
At every outer loop iteration, we calculate the exact $w^*$, add gaussian noise with standard deviation $\sigma$, and observe the effect on the expected returns $J$ after the convergence of $\theta$. 
Figure~\ref{fig:ift_approx_tab} demonstrates the results for the exact outer loop gradient $\frac{\partial\varphi(\theta)}{\partial\theta}$ as well as the gradient using the identity approximation of the inverse Jacobian term.
Surprisingly, we did not observe significant benefits of using the Jacobian~$\frac{\partial f(\theta, w^*)}{\partial w}$.
We conjecture that the inverse Jacobian acts like a preconditioner~\citep{boyd2004convex} on $\frac{\partial\varphi(\theta)}{\partial\theta}$ and the preconditioner is useful in our setting only near the exact inner loop solutions $w^*$.
We leave the theoretical investigation of the IFT sensitivity as future work and refer the reader to~\citet{lorraine2020optimizing} for a discussion about approximations of the Jacobian term.

\section{Experimental Details}
\label{sec:setup}

In Section~\ref{sec:exp}, we use CartPole~\citep{barto1983neuronlike}, an environment with 2 actions, 4-dimensional continuous state space, and optimal returns of 500. 
We train the agents for 200000 environment steps. 
The temperature $\alpha$ is 0.01. 
We sample from the replay buffer with a mini-batch size of 256. 
The discount factor $\gamma$ is 0.99. 
At each time step during training, the agent chooses a random action with a probability of 0.1 for exploration. 
We have a separate copy of the environment where we evaluate the agent and take the average over 10 runs to estimate the returns. 
We run each experiment using 10 random seeds.

We set the number of Q-function updates $K$ equal to 1. 
For the results with $K=3$ and $K=10$, see Figure~\ref{fig:ablations}. 
We highlight that after each outer loop step, weights $w$ are warm-started using the last iterate of the previous inner loop (instead of randomly initializing $w$ and training from scratch).
We use Adam optimizer with the learning rate $10^{-3}$ for updating $\theta$ and perform a hyperparameter sweep over the learning rate for $w$ in $\{3 \cdot 10^{-4}, 10^{-3}, 3 \cdot 10^{-3}\}$. 
We make a sweep over the moving average coefficient $\tau$ for the target network $\bar{w}$ in $\{0.005, 0.01\}$. 
Both of the parameters control how fast the Q-network parameters are updated relatively to the model parameters.
Since the CartPole environment is non-stochastic, we use a deterministic dynamics model. 
All networks have two hidden layers and ReLU activations~\citep{nair2010rectified}.
For both hidden layers in all networks, we set the dimensionality to 32. 
In the experiment with the limited model class capacity, we vary the hidden dimensionality in $\{1, 2, 3, 4, 6, 12\}$ for the dynamics and reward networks to measure how the limitation affects the agent's performance.
In the experiment with the distractors, we vary the number of gaussians in $\{2^4, 2^5, 2^6, 2^7, 2^8, 2^9\}$ to measure how the uninformative state components affect the returns.

For the value equivalence principle~(VEP) baseline, we have followed the experimental setup from the original paper~\citep{grimm2020value}.
Perhaps surprisingly, the authors find that it suffices to use a set of all deterministic state-independent policies as $\Pi$ and 5 random value functions as $\mathcal{V}$ for CartPole (see Appendix A.2.3 in~\citep{grimm2020value}). 

We have used CPU-only nodes of the internal cluster.
Each experiment requires 10 seeds $\times$ 3 algorithms $\times$ 2 $\tau$'s $\times$ 6 hidden sizes / 6 numbers of distractors $\times$ 3 LRs resulting in 2160 total jobs.

\begin{figure}[t]
\begin{center}
\begin{subfigure}{0.48\textwidth}
	\centerline{\includegraphics[width=\textwidth]{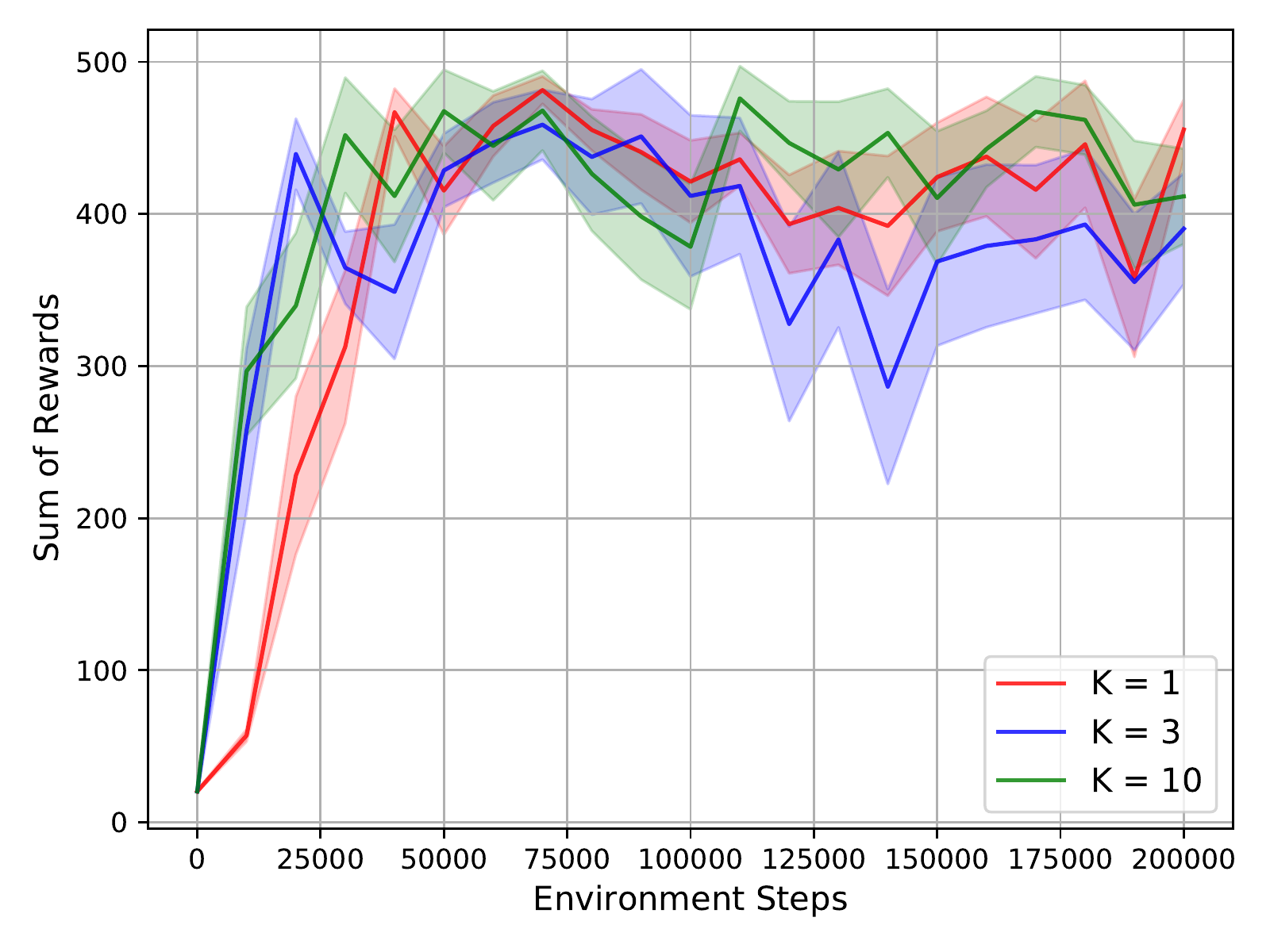}}
\end{subfigure}
~
\begin{subfigure}{0.48\textwidth}
	\centerline{\includegraphics[width=\textwidth]{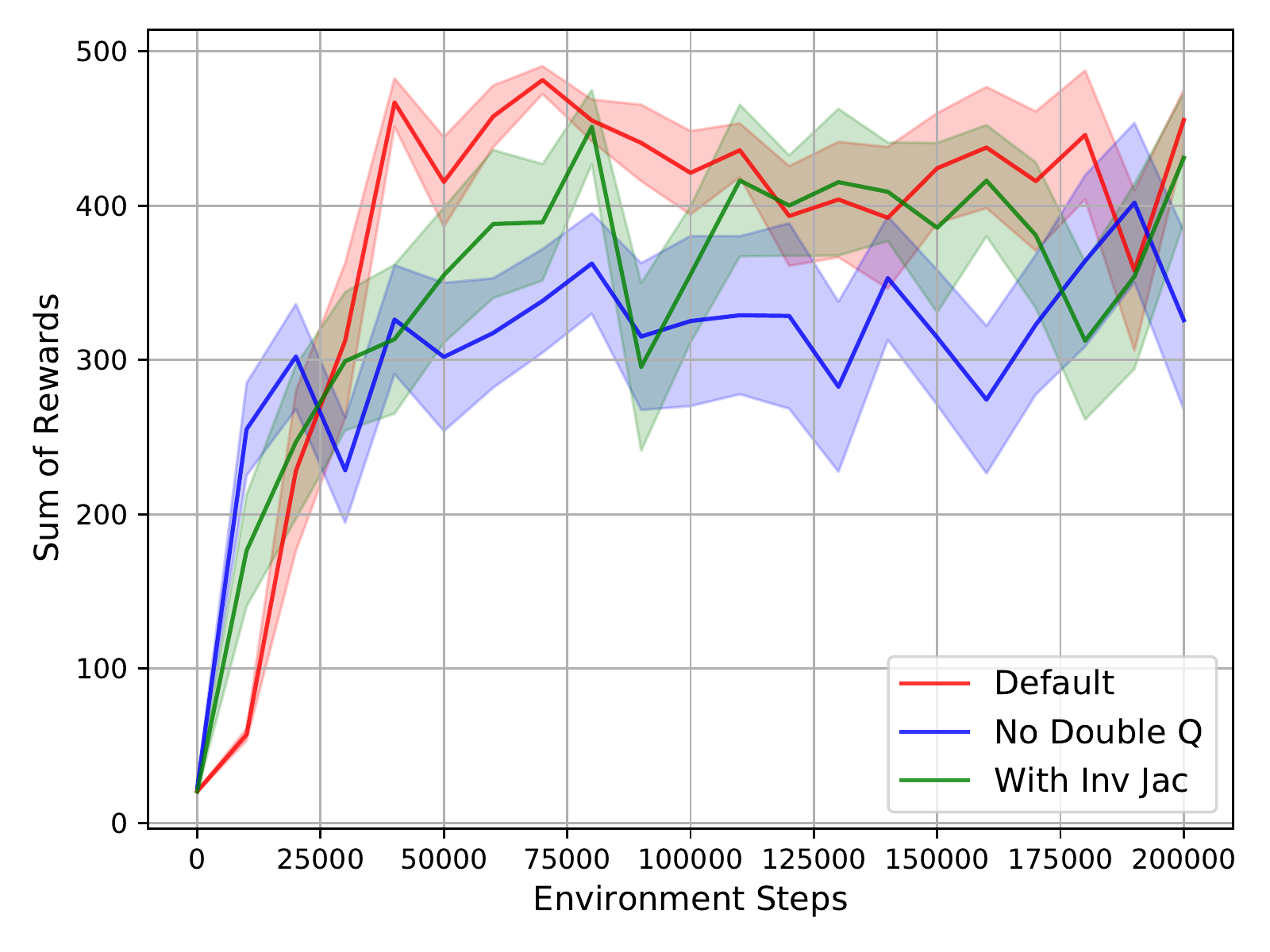}}
\end{subfigure}

\caption{\textbf{Left:} Returns of OMD agents for the varying number of inner loop steps $K$ per outer loop step. The difference between 1, 3, and 10 steps is insignificant.
\textbf{Right:} The evaluation returns of OMD agents for the default agent, the agent without double Q learning, and the agent without the identity approximation of the inverse Jacobian. Using two Q networks increases the returns while using the inverse Jacobian does not change the performance significantly.
The shaded region is the standard error over 10 runs.}
\label{fig:ablations}
\end{center}
\vskip -0.2in
\end{figure}

\section{Ablation Study}
\label{sec:ablation}

Section~\ref{sec:omd_cont} introduces a series of approximations to scale the OMD algorithm to non-tabular environments.
We analyze the effect of the approximations by varying the number of inner loop steps $K$, using the inverse Jacobian term, and using a single Q-function estimator (without double Q-learning).
Figure~\ref{fig:ablations} summarizes the findings of the ablations.
We did not observe significant changes in performance for different values of $K$ for CartPole.
The result suggests that as long as the Q-network update speed (which is also controlled by the target update coefficient $\tau$ and learning rates) stays aligned with the model update speed, adding more inner loop steps is not necessary.

We observed that Q-functions trained with OMD can be prone to overestimation of Q-values showing that double Q-learning is important for OMD.
We conjecture that training a model that maximizes the returns can amplify the overestimation bias~\citep{hasselt2010double} caused by using (soft) maximized sampled targets.

Surprisingly, we did not observe significant benefits from using the inverse Jacobian~$\left(\frac{\partial f(\theta, w^*)}{\partial w}\right)^{-1}$.
We conjecture that there are two reasons explaining the phenomenon.
First, the modified constraint in~\eqref{eq:omd_final} forces the gradient of $L(\theta, w)$ to be zero, implying that the Jacobian is in fact the Hessian matrix $\frac{\partial^2 L(\theta, w^*)}{\partial w^2}$.
\citet{dauphin2014identifying, sagun2017empirical} observed that Hessians of neural networks tend to be singular. 
Since a system of linear equations with a singular matrix has multiple solutions, it is up to a linear algebra solver to choose the solution. 
One of the alternatives would be a min-norm solution corresponding to the Moore-Penrose pseudoinverse and the solution might not be providing a useful inductive bias for the learning process of $\theta$.
Second, the Jacobian term could be useful only in proximity to the exact inner loop solution $w^*$.
Since the practical algorithm performs only $K$ inner loop steps and does not reach the exact $w^*$, the curvature information provided by the Jacobian might not be beneficial for training $\theta$.

Finally, we tried to use the gradient constraint in~\eqref{eq:omd_final} in the tabular setting. 
We got similar results as with the constraint on Q-values in~\eqref{eq:constr} suggesting that the two constraints have similar effects on the learning process.
Overall, the ablation study provides evidence that OMD is robust to the choice of the number of inner loop steps and the IFT approximations, while double Q-learning is the only important algorithmic modification.

\begin{figure}[t]
\begin{center}
\begin{subfigure}{0.48\textwidth}
	\centerline{\includegraphics[width=\textwidth]{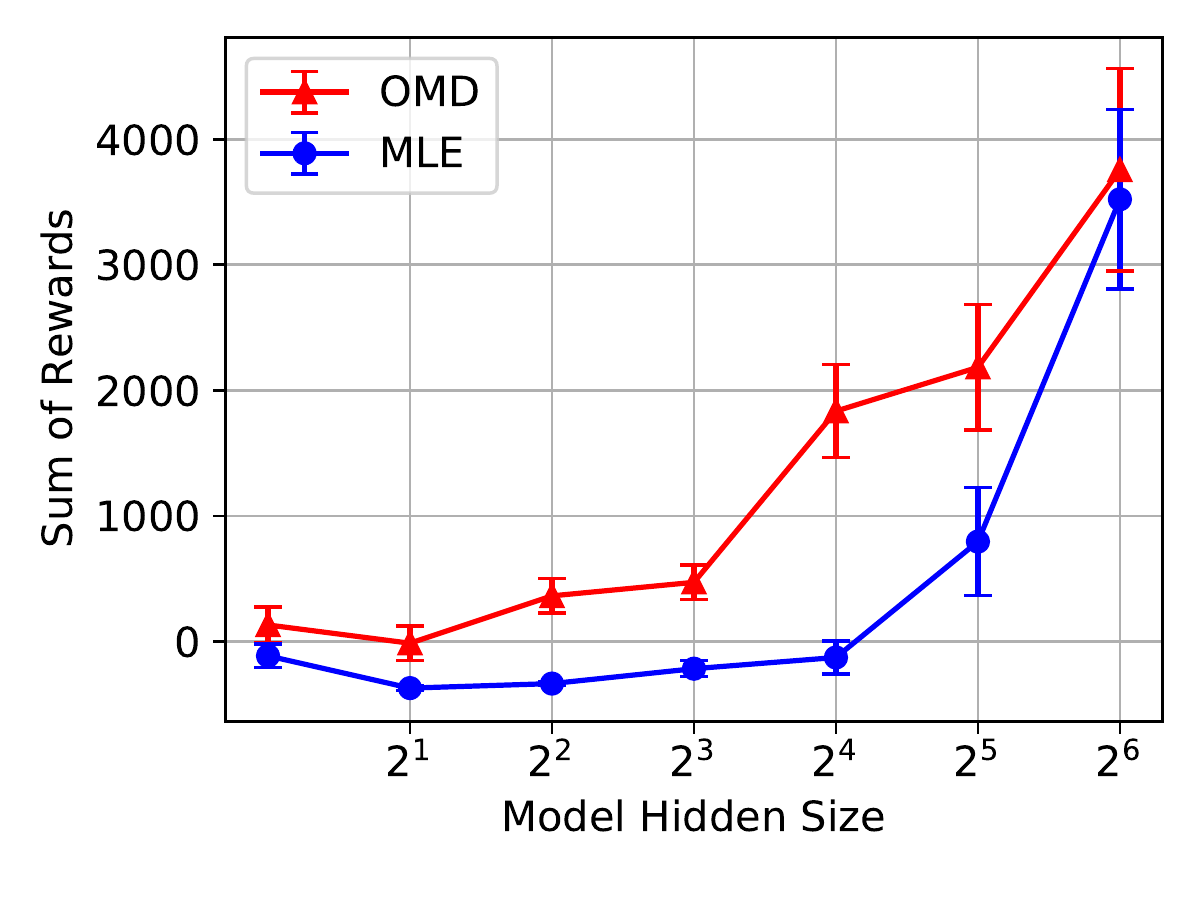}}
\end{subfigure}
~
\begin{subfigure}{0.48\textwidth}
	\centerline{\includegraphics[width=\textwidth]{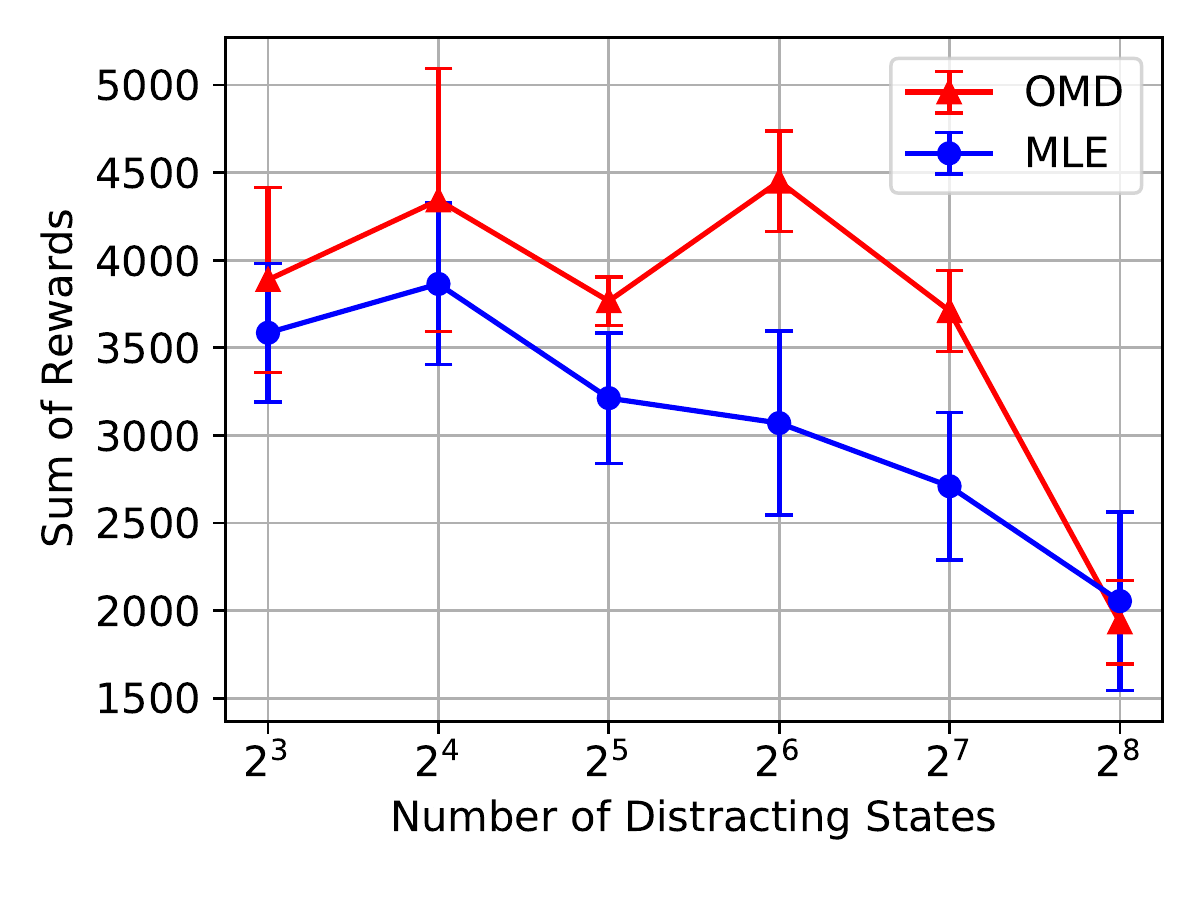}}
\end{subfigure}

\caption{Comparison of OMD and MLE under the model misspecification on HalfCheetah-v2. 
\textbf{Left:}~Returns for varying representational capacity of the model.
\textbf{Right:}~Returns when the state space is augmented with uninformative noise.
In both settings, the OMD model makes more useful predictions.
The std is measured over 5 runs.}
\label{fig:mujoco}
\end{center}
\vskip -0.2in
\end{figure}

\section{Results on HalfCheetah}
\label{sec:mujoco}

We provide an additional comparison of OMD and MLE agents under the model misspecification on MuJoCo HalfCheetah~\citep{todorov2012mujoco}. 
For both agents, the inner optimizer is Soft Actor-Critic~(SAC)~\citep{haarnoja2018soft1, haarnoja2018soft2} with the default configuration.
OMD trains the model using~\ref{eq:grad_final}.
The MLE agent trains the model with MSE effectively becoming the MBPO algorithm~\citep{janner2019trust} without having an ensemble of models and learning the variance of the predictions. 

We perform a hyperparameter sweep over the model learning rate in $\{10^{-4}, 3 \cdot 10^{-4}\}$, over the SAC networks learning rate in $\{10^{-4}, 3 \cdot 10^{-4}\}$, and over $K$ in $\{1, 3\}$. Model hidden size is 64 for the experiment with distractors.

Figure~\ref{fig:mujoco} summarizes the results. Similarly to the observations on the tabular and CartPole environments, the experiments provide evidence that OMD should be preferred over the likelihood-based agent in the model misspecification setup.

\section{Proof of Bounds}
\label{sec:bounds_proof}

Section~\ref{sec:bounds} discusses the bounds on $Q^*$ approximation error obtained by the MLE and OMD agents.
We first prove a lemma relating the error of the model approximation and the Bellman operator approximation.
We then prove a theorem giving a bound on $Q^*$ error.
For simplicity, we focus on the case with the Bellman error in the true MDP \eqref{eq:bellman_outer} as the objective function and ``hard'' versions of the Bellman optimality operators which are obtained by taking the limit of the log-sum-exp temperature $\alpha \to 0$. 
Note that results for MLE hold for any agent that approximates the reward and dynamics functions, but we call the agent MLE since it is a common choice for model parameters estimation.

\textbf{Notation.} 
We denote $p(\cdot|s,a)$ and $Q(\cdot, a)$ as vectors of transition probabilities and Q-values for all states in $\mathcal{S}$. 
The MLE model is given by $(\hat{p}, \hat{r})$ and the corresponding Bellman optimality operator is denoted as $\hat{B}Q$. To have a distinction between OMD and MLE, we denote OMD parameters as $\hat{\theta}$ and the corresponding operator as $B^{\hat{\theta}} Q$.
$\|f\|_\infty = \sup_{x}|f(x)|$ is the infinity norm of a function $f$. 
$\mathbf{1}$ is a vector of an appropriate size with ones as entries.

\begin{restatable}{lemma}{simlemma} \emph{(Bellman operator error bound)} 
Let $Q$ be an action-value function. 
If the dynamics $\hat{p}$ and the reward $\hat{r}$ have the bounded errors 
$\max_{s, a} \left\|p(\cdot|s,a) - \hat{p}(\cdot|s,a)\right\|_1 = \epsilon_p$ and $\max_{s,a}\left|r(s,a) - \hat{r}(s,a)\right| = \epsilon_r$, 
and the reward function is bounded $r(s,a) \in \left[0, r_{\max}\right] \enspace \forall s,a$, we have
    \begin{equation}
        \label{eq:lemma}
        \left\|BQ - \hat{B}Q\right\|_\infty \le \epsilon_r + \frac{\gamma \epsilon_p \Rmax}{2(1-\gamma)}.
    \end{equation}
\end{restatable}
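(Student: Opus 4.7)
The plan is to decompose $BQ(s,a) - \hat{B}Q(s,a)$ into a reward-difference term and a transition-difference term, bound each separately, and then take the sup over $(s,a)$. Concretely, letting $V(s') = \max_{a'} Q(s',a')$ (this is the object appearing in the ``hard'' Bellman optimality operator obtained as $\alpha\to 0$), I would write
\begin{equation*}
BQ(s,a) - \hat{B}Q(s,a) = \bigl(r(s,a) - \hat{r}(s,a)\bigr) + \gamma \sum_{s'} \bigl(p(s'|s,a) - \hat{p}(s'|s,a)\bigr) V(s').
\end{equation*}
The first term is immediately bounded in absolute value by $\epsilon_r$ by assumption.

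For the second term I would use a standard ``centering'' trick that exploits $\sum_{s'}(p - \hat{p})(s'|s,a) = 0$. Since $Q$ is an action-value function whose rewards lie in $[0,\Rmax]$, I first note that $V(s') \in [0, \Rmax/(1-\gamma)]$ (this is the implicit range assumption on $Q$ used throughout). Then for any constant $c$,
\begin{equation*}
\sum_{s'} \bigl(p(s'|s,a) - \hat{p}(s'|s,a)\bigr) V(s') = \sum_{s'} \bigl(p(s'|s,a) - \hat{p}(s'|s,a)\bigr) \bigl(V(s') - c\bigr),
\end{equation*}
so choosing $c = \Rmax/(2(1-\gamma))$ centers $V$, giving $\|V - c\|_\infty \le \Rmax/(2(1-\gamma))$. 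Applying Hölder's inequality then yields
\begin{equation*}
\left|\sum_{s'} (p - \hat{p})(s'|s,a)\, V(s')\right| \le \|p(\cdot|s,a) - \hat{p}(\cdot|s,a)\|_1 \cdot \frac{\Rmax}{2(1-\gamma)} \le \frac{\epsilon_p\, \Rmax}{2(1-\gamma)}.
\end{equation*}

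Combining, $|BQ(s,a) - \hat{B}Q(s,a)| \le \epsilon_r + \gamma \epsilon_p \Rmax / (2(1-\gamma))$ uniformly in $(s,a)$, which gives the $\|\cdot\|_\infty$ bound. The only nontrivial step is the centering trick that saves a factor of two over the naive bound $\|p - \hat{p}\|_1 \|V\|_\infty$; without it one obtains $\gamma \epsilon_p \Rmax/(1-\gamma)$, which is looser by a factor of $2$ and would not match the stated constant. A minor subtlety worth flagging in the writeup is that the lemma's phrase ``action-value function'' is used in the mild sense that $Q$ takes values in $[0,\Rmax/(1-\gamma)]$ (as is the case for every fixed point of a Bellman optimality operator with rewards in $[0,\Rmax]$, and in particular for the $\hat{Q}_{\mathrm{OMD}}$ and $\hat{Q}_{\mathrm{MLE}}$ to which the theorem applies this lemma).
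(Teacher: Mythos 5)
Your proof is correct and follows essentially the same route as the paper's: the same reward/transition decomposition, the same centering of $\max_{a'}Q(\cdot,a')$ at $\Rmax/(2(1-\gamma))$ using that $p-\hat{p}$ sums to zero, and the same application of H\"older's inequality, with the implicit range assumption $0 \le Q \le \Rmax/(1-\gamma)$ that the paper also invokes. Nothing further is needed.
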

\begin{proof} 
    Using the derivations similar to the proof of the simulation lemma~\citep{jiang2018notes}, we obtain for any state-action pair $(s,a)$
    \begin{align*}
        & \left|BQ(s,a) - \hat{B}Q(s,a)\right|   \\
        =&\left|\left(r(s,a) + \gamma \sum_{s'}p(s'|s,a)\max_{a'}Q(s',a')\right) - \left(\hat{r}(s,a) + \gamma \sum_{s'}\hat{p}(s'|s,a)\max_{a'}Q(s',a')\right)\right| \\
        \le& \left|r(s,a) - \hat{r}(s,a)\right| + \gamma \left|\sum_{s'} \left(p(s'|s,a) - \hat{p}(s'|s,a)\right) \max_{a'}Q(s',a')\right| \\
        \le& \ \epsilon_r\ + \gamma \left|\sum_{s'} \left(p(s'|s,a) - \hat{p}(s'|s,a)\right) \left(\max_{a'}Q(s',a') - \frac{\Rmax}{2(1-\gamma)}\right)\right| \qquad \because p \text{ and } \hat{p} \text{ are distributions} \\
        \le& \  \epsilon_r\ + \gamma \left\|p(\cdot|s,a) - \hat{p}(\cdot|s,a))\right\|_1 \cdot \left\|\max_{a'}Q(\cdot, a') - \frac{\Rmax}{2(1-\gamma)} \mathbf{1} \right\|_\infty \qquad \because \text{H{\"o}lder's inequality}\\
        \le& \ \epsilon_r + \gamma \epsilon_p \left\|\max_{a'}Q(\cdot,a') - \frac{\Rmax}{2(1-\gamma)}\mathbf{1}\right\|_\infty \\
        \le& \ \epsilon_r + \frac{\gamma \epsilon_p \Rmax}{2(1-\gamma)} \qquad \because 0 \le Q(s', a') \le \frac{\Rmax}{1-\gamma}.
    \end{align*}
    Since the inequalities hold for all state-action pairs, we can take the maximum over $(s,a)$ and obtain
        $$\max_{s,a} \left|BQ(s,a) - \hat{B}Q(s,a)\right| \le \epsilon_r + \frac{\gamma \epsilon_p \Rmax}{2(1-\gamma)}.$$
\end{proof}

\qstarerror*
\begin{proof}

    \textbf{(OMD)}\enspace 
    For all state-action pairs $(s,a)$ we get
    \begin{align*}
        & \left|Q^*(s,a) - \hat{Q}(s,a)\right| \\
        = & \left|BQ^*(s,a) - B^{\hat{\theta}}\hat{Q}(s,a)\right| \qquad \because \text{Fixed\ point}\\
        = & \left|BQ^*(s,a) - B\hat{Q}(s,a) + B\hat{Q}(s,a) - B^{\hat{\theta}}\hat{Q}(s,a)\right| \\
        \leq & \left|BQ^*(s,a) - B\hat{Q}(s,a)\right| + \left|B\hat{Q}(s,a) - B^{\hat{\theta}}\hat{Q}(s,a)\right|  \\
        \leq &\ \epsilon + \left|BQ^*(s,a) - B\hat{Q}(s,a)\right|  \\
        = &\ \epsilon + \left|\left(r(s,a) + \gamma \sum_{s'}p(s'|s,a)\max_{a'}Q^*(s',a')\right) - \left(r(s,a) + \gamma \sum_{s'}p(s'|s,a)\max_{a'}\hat{Q}(s',a')\right)\right|  \\
        = &\ \epsilon + \gamma \left|\sum_{s'}p(s'|s,a)\left(\max_{a'}Q^*(s',a') - \max_{a'}\hat{Q}(s',a')\right)\right| \\
        \leq &\ \epsilon + \gamma \left\|p(\cdot|s,a)\right\|_1 \cdot \left\|\max_{a'}Q^*(\cdot,a') - \max_{a'}\hat{Q}(\cdot,a')\right\|_\infty  \qquad \because \text{H{\"o}lder's inequality} \\
        = &\ \epsilon + \gamma \left\|\max_{a'}Q^*(\cdot,a') - \max_{a'}\hat{Q}(\cdot,a')\right\|_\infty \qquad \because p \text{ is a distribution} \\
        \leq &\ \epsilon + \gamma \max_{a'}\left\|Q^*(\cdot,a') - \hat{Q}(\cdot,a')\right\|_\infty \\
        = &\  \epsilon + \gamma \max_{s',a'}\left|Q^*(s',a') - \hat{Q}(s',a')\right|.
    \end{align*}
    Taking the maximum over $(s,a)$, we get the recursion:
    \begin{align*}
        \max_{s,a}\left|Q^*(s,a) - \hat{Q}(s,a)\right|
        &\leq \epsilon + \gamma \max_{s,a}\left|Q^*(s,a) - \hat{Q}(s,a)\right| \\
        \max_{s,a}\left|Q^*(s,a) - \hat{Q}(s,a)\right|
        &\leq \frac{\epsilon}{(1 - \gamma)}.
    \end{align*}
    
    \textbf{(MLE)}\enspace 
    The proof for MLE can be obtained using the same derivations and additionally using the result of the lemma bounding the difference between the Bellman operators: 
    $$
        \epsilon = \max_{s,a} \left|B\hat{Q}_{\mathrm{MLE}}(s,a) - \hat{B} \hat{Q}_{\mathrm{MLE}}(s,a)\right| \le \epsilon_r + \frac{\gamma \epsilon_p \Rmax}{2(1 - \gamma)}.
    $$
    
\end{proof}

The last inequality demonstrates that the OMD bound is tighter. 
OMD model directly optimizes $\left|B\hat{Q}(s,a) - \hat{Q}(s,a)\right| = \left|B\hat{Q}(s,a) - B^{\hat{\theta}}\hat{Q}(s,a)\right|$, while MLE minimizes $\epsilon_r$ and $\epsilon_p$ that only upper bound $\left|B\hat{Q}(s,a) - \hat{B}\hat{Q}(s,a)\right|$ as suggested by the lemma.
Hence, given the same budget of representational capacity, OMD will learn a model that is more helpful for approximating the optimal Q-function. 
Finally, Figure~\ref{fig:equiv_mdps_and_bounds} empirically supports our theoretical findings showing that both the $Q^*$ approximation error and the error-bound gap are smaller for OMD.

\end{document}